\DeclareMathAlphabet{\mathcal}{OMS}{cmsy}{m}{n}
\newcommand{\Min}[1]{\underset{#1}{\textrm{min}}}
\newcommand{\R}{\mathbb{R}}
\newcommand{\Argmin}[1]{\underset{#1}{\textrm{arg~min }}}
\newcommand{\stpro}{\mathcal{I}}
\DeclareMathOperator{\con}{con}
\DeclareMathOperator{\rk}{rk}
\DeclareMathOperator{\tr}{tr}
\DeclareMathOperator{\Gr}{Gr}
\DeclareMathOperator{\St}{St}
\DeclareMathOperator{\grad}{grad}
\newtheorem{proof}{Proof}
\newtheorem{lemma}{Lemma}
\begin{document}

\title{Robust PCA and subspace tracking from incomplete observations using $\ell_0$-surrogates\thanks{This work has been supported by the DFG excellence initiative research cluster \emph{CoTeSys}.}
}

\author{Clemens Hage and Martin Kleinsteuber\\ \\
Department of Electrical Engineering and Information Technology\\
Technische Universit\"{a}t M\"{u}nchen\\
Arcisstr.~21, 80333 Munich, Germany\\
\texttt{\{hage,kleinsteuber\}@tum.de}\\
\texttt{http://www.gol.ei.tum.de}
}

\maketitle

\begin{abstract}
Many applications in data analysis rely on the decomposition of a data matrix into a low-rank and a sparse component. Existing methods that tackle this task use the nuclear norm and $\ell_1$-cost functions as convex relaxations of the rank constraint and the sparsity measure, respectively, or employ thresholding techniques.
We propose a method that allows for reconstructing and tracking a subspace of upper-bounded dimension from incomplete and corrupted observations. It does not require any a priori information about the number of outliers. The core of our algorithm is an intrinsic Conjugate Gradient method on the set of orthogonal projection matrices, the so-called Grassmannian. Non-convex sparsity measures are used for outlier detection, which leads to improved performance in terms of robustly recovering and tracking the low-rank matrix.
In particular, our approach can cope with more outliers and with an underlying matrix of higher rank than other state-of-the-art methods.

\end{abstract}

\section{Introduction}
The detection of subspaces that best fit high-dimensional data is a challenging and important task in data analysis with countless applications. 
The classic Principal Component Analysis (PCA) is still the standard tool for searching the best rank-$k$ approximation  of a high-dimensional data set $X \in \mathbb{R}^{m \times n}$, where the approximation quality is measured in terms of the Frobenius norm.
On the one hand, this minimization problem can be solved easily via a Singular Value Decomposition (SVD), while on the other hand the choice of the Frobenius norm makes this approach highly vulnerable to heavy outliers. 
%
%
In the past years a lot of effort has been spent on methods that allow to find the best fitting subspace
despite the presence of heavy outliers in the measurements or missing data. 
%
%
Certain approaches in this area commonly known as \emph{Robust~PCA} stick closely to the classic PCA concept and robustify it by measuring the distance from the data points to the subspace with $\ell_1$-cost functions, cf.~\cite{Ding2006}, \cite{Kwak2008}.
Others, including the approach presented here, relate Robust PCA to the field of robust matrix completion.
That is, given some incomplete observations $\hat{X}$ of a corrupted and possibly noisy data matrix $X$, which is the superposition of a low-rank matrix $L$ and a sparse matrix $S$, the task is to recover $L$ and $S$.

\subsection{Related Work}
One of the most prominent approaches to Robust PCA in the past years is proposed by \cite{Candes2011} and \cite{wright:09}. The authors formulate a convex optimization problem by relaxing the hard rank constraint to a nuclear norm minimization and analyze how well the $\ell_1$-relaxation approximates the $\ell_0$-norm in the low-rank and sparse decomposition task. Methods to solve the problem include Singular Value Thresholding (\emph{SVT}, \cite{Cai2008}) and the exact (\emph{EALM}) or inexact (\emph{IALM}) augmented Lagrangian multiplier method \citep{Lin2010}, which are fast and comparably reliable if the underlying assumptions, i.e. low-rank-property and sparsity of outliers, are valid. The problem of reconstructing the low-rank matrix $L$ in the case where entire columns in the measurements are corrupted is considered by \cite{Chen2011}. A method called \emph{SpaRCS} is proposed by \cite{Waters2011}, which recovers the low-rank and the sparse part of a matrix from compressive measurements.

In many applications it is reasonable to assume that no additive Gaussian noise is present in the model, as noise is negligible compared to the signal power, e.g.~when dealing with high-quality sensors in the field of Computer Vision. However, there also exist approaches such as \emph{GoDec} \citep{Zhou2011} that explicitly model additional Gaussian noise. The method uses thresholding techniques and extends the low-rank and sparse decomposition to the noisy case. \emph{SpaRCS} and \emph{GoDec} aim at recovering both the low-rank matrix $L$ and the outliers $S$ from noisy measurements and therefore require an upper bound for the cardinality of the sparse component.

In contrast to the often-performed rank-relaxation, there also exist methods which fix the dimension of the subspace approximation, such as the greedy algorithm \emph{GECO} \citep{Shalev-Shwartz2011}. This method reconstructs a dataset iteratively based on SVD while increasing the rank of the approximation in each step.
A general framework for optimizing over matrices of a fixed rank has also been proposed by \cite{Shalit2010}. One drawback here is that the manifold requires a fixed rank $k$ and optimal points that may have a rank strictly lower than $k$ are not in the feasible set.

An alternative and elegant way to control the rank in terms of an upper bound is optimization on the set of fixed-dimensional subspaces, the so-called Grassmannian. As pointed out by \cite{Meyer2011}, optimizing on the Grassmannian offers many advantages, such as limited memory usage and a lower number of parameters to optimize. Although the optimization problem becomes non-convex, reliable performance can be achieved in practice.
In the work of \cite{Keshavan2010}, spectral techniques are combined with a learning algorithm on the Grassmannian. \cite{Boumal:11} propose a Riemannian trust-region method on the Grassmannian for low-rank matrix completion, which, however, does not consider heavy outliers. The \emph{GROUSE} algorithm \citep{Balzano2010} furthermore demonstrates that optimization on the Grassmannian allows to estimate the underlying subspace incrementally. Instead of batch-processing a data set, the samples can be processed one at a time, which makes it possible to track a subspace that varies over time, even if it is incompletely observed. A small portion of the data is used to obtain an initial subspace estimate and with every new incoming data sample this estimate is modified to follow changes in the dominant subspace over time.
While the method of \cite{Balzano2010} operates with $\ell_2$-cost functions, its recent adaptation \emph{GRASTA} \citep{He2012} performs a gradient descent on the Grassmannian and aims at optimizing an $\ell_1$-cost function to mitigate the effects of heavy outliers in the subspace tracking stage.
The authors overcome the non-differentiability of the $\ell_1$-norm by formulating an augmented Lagrangian optimization problem at the cost of doubling the number of unknown parameters.
\subsection{Our Contribution}
Although the $\ell_1$-norm leads to favorably conditioned optimization problems it is well-known that penalizing with non-convex $\ell_0$-surrogates allows reconstruction even in the case when $\ell_1$-based methods fail, see e.g.~\cite{Chartrand2008}.
Therefore, we propose a framework that combines the advantages of Grassmannian optimization with non-convex sparsity measures. Our approach focuses primarily on reconstructing and tracking the underlying subspace and can operate on both fully and incompletely observed data sets. The algorithm performs a low-rank and sparse decomposition. However, it does not require any information about the cardinality or the support set of the sparse outliers, thus being different from \emph{SpaRCS} and \emph{GoDec}.\\
In contrast to \emph{GRASTA} \citep{He2012}, the method presented in this paper directly optimizes the cost function and thus operates with less than half the number of unknowns. Like all optimization methods on the Grassmannian our algorithm allows to upper-bound the dimension of the underlying subspace and easily extends to the problem of robustly tracking this subspace.

Experimental results confirm that the proposed method can cope with more outliers and with an underlying matrix of higher rank than other state-of-the-art methods, thus extending possible areas of application from the strict low-rank and sparse decomposition to the more general area of robust dimensionality reduction.
In the following section we present an alternating minimization scheme and relate our approach to dimensionality reduction via PCA. Subsequently, we carefully derive and explain a Conjugate Gradient (CG) type algorithm on the Grassmannian for solving the individual minimization tasks. We then extend the static method by a dynamic subspace tracking algorithm and finally, we evaluate the performance of the proposed method with various $\ell_0$-surrogates and compare our approach to other state-of-the-art methods.

\section{Problem statement}

Let $X \in \R^{m \times n}$ be the data matrix of which we select the partial entries $\hat{X} = \mathcal{A}(X)$ using a linear operator. We consider the data model  $X = L + S$ where $L$ is of rank $\rk(L) \leq k$ and $S$ is sparse. Our aim is to recover $L$ from the observations $\hat{X}$. A direct approach leads to the numerically infeasible optimization problem
\begin{align}\label{eq:prob_infeas}
\Min{\rk{L} \leq k} & \|\hat{X}-\mathcal{A}(L)\|_0.
\end{align}
Since $\rk(L) \leq k$, we can write the matrix $L$ as the product $ L=U Y $, where $Y \in \R^{k \times n}$ and $U$ is an element of the so-called \emph{Stiefel manifold} $\St_{k,m} = \{U \in \mathbb{R}^{m \times k} | U^\top U = I_k\}$ with $I_k$ denoting the $(k \times k)$-identity matrix. This factorization of $L$ has the following practical interpretation: $U$ is an orthonormal basis of the robustly estimated subspace of the first $k$ principal components, hereafter referred to as \emph{(robust) dominant subspace}. Note, that neither $U$ nor $Y$ are uniquely determined in this factorization. Let $O(k)=\{\theta \in \mathbb{R}^{k \times k} | \theta^\top \theta = I_k\}$ define the set of orthogonal matrices, then $U$ can always be adjusted by an orthogonal matrix $\theta \in O(k)$,  such that $UY=U \theta \theta^\top Y$ with $\theta^\top Y$ having uncorrelated rows, leading to the $k$ (robustly estimated) principle component scores of $X$.

Our concession to the numerical infeasibility of \eqref{eq:prob_infeas} is to employ an alternative, possibly non-convex sparsity measure $h$. Some concrete examples can be found in Section \ref{sec:penfuns}.
As a result, problem \eqref{eq:prob_infeas} 
relaxes to the minimization problem
\begin{align}\label{eq:prob_infeas2}
\Min{U \in \St_{k,m},Y \in  \R^{k \times n}} & h(\hat{X}-\mathcal{A}(U Y)).
\end{align}
Problem \eqref{eq:prob_infeas2} can be addressed in two different ways. Either the full data set $\hat{X}$ is processed at once, resulting in one estimate for $U$ and $Y$, or the data vectors $\hat{x}$ are processed one at a time. In the latter case, while for each new sample $\hat{x}$ one obtains a corresponding optimal coordinate set $y$, the subspace estimate $U$ should vary smoothly over time and fit both recent and current observations. We will refer to the former problem as \emph{subspace reconstruction} or \emph{Robust PCA} and to the latter as \emph{subspace tracking}.

\section{An alternating minimization framework using $\ell_0$ surrogates}
Directly tackling problem \eqref{eq:prob_infeas2} has two severe drawbacks. The first is the above mentioned ambiguity in the factorization $UY$ and the second is the fact that reasonable sparsity measures are not smooth and thus forbid the use of fast smooth optimization methods.
We overcome these two problems by proposing an alternating minimization framework that gets rid of the ambiguity by iterating on a more appropriate geometric setting and allows us to use smooth approximations of $h$, which combine the
advantage of having fast smooth optimization tools at hand together with the strong capability of non-convex sparsity measures in reconstruction tasks.

So let $h_\mu\colon \R^{m \times n} \to \R$ with $\mu \in \mathbb{R}^+$ be a smooth approximation of $h$ such that $h_\mu$ converges pointwise to $h$ as $\mu$ tends to zero. Monotonically shrinking the smoothing parameter $\mu$ between alternating minimization steps allows to combine the advantages of both smooth optimization and $\ell_0$-like sparsity measures. Schematically, we tackle problem \eqref{eq:prob_infeas2} by iterating the following two  steps until convergence.

\noindent \textbf{Step \#1}\quad Let $L^{(i)}=U^{(i)} Y^{(i)}$ be the $i$-th iterate of the minimization process. In the first step, the estimate of the dominant subspace is improved by solving
\begin{align}\label{eq:opt_U}
U^{(i+1)}=\Argmin {U \in \St_{k,m}} & h_\mu(\hat{X}- \mathcal{A}(U U^\top L^{(i)})).
\end{align}
We master the ambiguity problem by reformulating \eqref{eq:opt_U} as a minimization task on the set of symmetric rank-$k$ projectors, which possesses a manifold structure and is known as the \emph{Grassmannian}
\begin{align}
\Gr_{k,m} := \{P \in \mathbb{R}^{m \times m} | P = U U^\top, U \in \St_{k,m}\} .
\end{align}
This results in the optimization problem
\begin{align}\label{eq:opt_over_GR}
P^{(i+1)}=\Argmin {P \in \Gr_{k,m}} & h_\mu(\hat{X}- \mathcal{A}(P L^{(i)})),
\end{align}
with $P^{(i+1)}=U^{(i+1)}(U^{(i+1)})^\top$.\\

\noindent \textbf{Step \#2} \quad In the second step, the coordinates with respect to $U^{(i+1)}$ of the projected data points are adjusted by solving
\begin{align}\label{eq:opt_over_R}
Y^{(i+1)}=\Argmin {Y \in \R^{k \times n}} & h_\mu(\hat{X}- \mathcal{A}(U^{(i+1)} Y)).
\end{align}
Then $\mu$ is decreased previous to the next iteration.
Since ordinary PCA of the data allows a reasonably good initialization of $U$ and $Y$, we initialize our algorithm with a truncated SVD of some matrix $X_0$ that is in accordance with the measurements, i.e. $\mathcal{A}(X_0)=\hat{X}$. The alternating scheme is summarized in Algorithm \ref{alg:noisefree}.
\begin{algorithm}
   \caption{Alternating scheme for Robust PCA}
\label{alg:noisefree}
\begin{algorithmic}
   \STATE {\bfseries Initialize:}
   \STATE Choose $X_0$, s.t.~$\mathcal{A}(X_0) = \hat{X}$.
   \STATE Obtain $U^{(0)}$ from $k$ left singular values of $X_0$.
   \STATE $Y^{(0)} = U^{(0)\,\top} X_0$, $L^{(0)} = U^{(0)}Y^{(0)}$, $P^{(0)} = U^{(0)}U^{(0)\,\top}$
   \STATE Choose $\mu^{(0)}$ and $\mu^{(I)}$, compute $c_\mu = \left(\tfrac{\mu^{(I)}}{\mu^{(0)}}\right)^{1/(I-1)}$
\FOR{$i = 1:I$}
\vspace{6pt}
   \STATE $P^{(i+1)} = \Argmin {P \in \Gr_{k,m}} h_{\mu^{(i)}}(\hat{X}- \mathcal{A}(P L^{(i)})) \quad$ \textbf{Step \#1}
   \STATE find $U^{(i+1)}$ \; s.t. \; $U^{(i+1)}U^{(i+1)\,\top} = P^{(i+1)}$
\vspace{6pt}
   \STATE $Y^{(i+1)} = \Argmin {Y \in \R^{k \times n}} h_{\mu^{(i)}}(\hat{X}- \mathcal{A}(U^{(i+1)} Y)) \quad$ \textbf{Step \#2}
\vspace{6pt}   
   \STATE $L^{(i+1)} = U^{(i+1)}Y^{(i+1)}$
   \STATE $\mu^{(i+1)} = c_\mu \mu^{(i)}$
\ENDFOR
\STATE $\hat{L} = U^{(I)}Y^{(I)}, \quad \hat{S} = X - \hat{L}$
\end{algorithmic}
\end{algorithm}
Only a finite number of alternating minimization steps is performed, each of which can be interpreted as an estimation for an appropriate initialization of the subsequent one. Its convergence analysis thus reduces to the question of convergence in the last iteration, which depends on the particular minimization algorithm and is briefly discussed in Section~\ref{subsec:cg_on_grass}.

\section{Optimizing the smooth sparsity measure}
The proposed alternating minimization scheme involves the two non-convex but \linebreak smooth optimization problems \eqref{eq:opt_over_GR} and \eqref{eq:opt_over_R}, both of which we minimize using a Conjugate Gradient type method due to its scalability and fast local convergence properties.  
The two proposed optimization methods are conceptually very similar but differ in the domains they operate on, as becomes clear from the cost functions and their respective gradients.
\begin{align}
\label{eq:costfct_P} &f_1 \colon \Gr_{k,m} \to \R, &P \mapsto h_\mu(\hat{X} - \mathcal{A}(PL)), \quad
&\nabla f_1 = -\mathcal{A}^\ast\Big(\nabla h_\mu(\hat{X} - \mathcal{A}(PL))\Big)\; L^\top\\
\label{eq:costfct_Y} &f_2 \colon \R^{k \times n} \to \R, &Y \mapsto h_\mu(\hat{X} - \mathcal{A}(UY)), \quad
&\nabla f_2 = -U^\top \mathcal{A}^\ast\Big(\nabla h_\mu(\hat{X} - \mathcal{A}(UY))\Big)
\end{align}
Note that $\mathcal{A}^\ast$ denotes the adjoint of the operator $\mathcal{A}$. CG methods for minimization problems in the Euclidean case, such as \eqref{eq:costfct_Y} are standard and well established. In contrast to this, the core of our algorithm, i.e.~minimizing \eqref{eq:costfct_P} is a geometric optimization problem on the real Grassmannian and thus requires additional concepts such as vector transport and retraction.

In the following, we will recall some general concepts and further collect the ingredients for our algorithm. In particular, we derive a new retraction that is crucial for the algorithm's computational performance. For a deeper and more general insight into the topic of Geometric Optimization we refer to the work of \cite{Absil2008}.

\subsection{Geometry of the Grassmannian}
Consider a projector $P \in \Gr_{k,m}$ as a point on the Grassmannian and let $\mathfrak{u}(m) := \{ \Omega \in \mathbb{R}^{m \times m} | \Omega^\top = -\Omega \}$ be the set of skew-symmetric matrices. Using the Lie bracket operator $\left[Z_1,Z_2\right] = Z_1 Z_2 - Z_2 Z_1$, the set of elements in the tangent space of $\Gr_{k,m}$ at $P$ is given by $T_P \Gr_{k,m} = \{[P,\Omega] \; | \; \Omega \in \mathfrak{u}_m\}$. In the following, we will endow $\R^{m \times m}$ with the Frobenius inner product
$\langle Z_1, Z_2 \rangle:= \tr(Z_1^\top Z_2)$ where $\tr(\cdot)$ denotes the trace operator, and consider the Riemannian metric on $\Gr_{k,m}$ accordingly as the restriction of this product to the respective tangent space.
The orthogonal projection of an arbitrary point $Z \in \R^{m \times m}$ onto the tangent space at $P$ is
\begin{align}
\label{eq:projectanspace}
\pi_P\colon \R^{m \times m} \to T_P \Gr_{k,m}, \quad Z \mapsto [P,[ P, Z_s]]
\end{align}
with $Z_s = \tfrac{1}{2}(Z + Z^\top)$ being the symmetric part of $Z$.\\

It is crucial for optimization procedures on manifolds to establish a relation between elements of the tangent space and corresponding points on the manifold, which motivates the usage of retractions. Conceptually, a retraction at some point $P$ is a mapping from the tangent space at $P$ to $\Gr_{k,m}$ with a local rigidity condition that preserves gradients at $P$.

The generic way of locally parameterizing a smooth manifold is via Riemannian exponential mappings. As they are costly to compute in general we perform an approximation based on the $QR$-decomposition. Let $Gl(m)$ be the set of invertible $(m \times m)$-matrices and let ${\mathcal R}(m) \subset Gl(m)$ be the set of
upper triangular matrices with positive entries on the diagonal.
It follows from the Gram-Schmidt orthogonalization procedure that the $QR$-decomposition
$O(m) \times {\mathcal R}(m) \to Gl(m), \;(Q,R) \mapsto QR$
is a diffeomorphism. Accordingly, every $A \in Gl(m)$ decomposes uniquely into $A=:A_Q A_R$ with $A_Q \in O(m)$ and $A_R \in {\mathcal R}(m)$. Moreover, it follows that the map
\begin{align}
q_\Omega \colon \R \to O(m), \quad q_\Omega(t):= (I_m + t \Omega)_Q
\end{align}
is smooth for all $\Omega \in \mathfrak{u}(m)$ with derivative at $0$ being $\dot{q}_\Omega(0)= \Omega$,
cf.~\cite{Kleinsteuber2007}. Using 
\begin{align}
\alpha_P^\prime\colon T_P \Gr_{k,m} \to \Gr_{k,m}, \quad \alpha^\prime_P(\xi):= q_{[\xi,P]}(1)\; P\;  \big(q_{[\xi,P]}(1)\big)^\top
\end{align}
and exploiting the properties of the exponential mapping, one can develop the following result:
\begin{lemma} \label{lemma:retract}
Consider arbitrary orthogonal matrices $\theta \in O(m)$. The mapping
\begin{align}\label{eq:retract}
\alpha_{P,\theta}\colon T_P \Gr_{k,m} \to \Gr_{k,m}, \quad \alpha_{P,\theta}(\xi):= \theta \; \big( q_{\theta^\top[\xi,P]\theta}(1) \big) \; \theta^\top P \; \theta \; \big( q_{\theta^\top[\xi,P]\theta}(1)\big)^\top \; \theta^\top
\end{align}
defines a set of retractions on $\Gr_{k,m}$.
\end{lemma}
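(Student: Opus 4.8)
The plan is to check, for each fixed $\theta \in O(m)$, that the assignment $P \mapsto \alpha_{P,\theta}$ satisfies the defining properties of a retraction on $\Gr_{k,m}$ --- smoothness of the induced map on the tangent bundle, the centering condition $\alpha_{P,\theta}(0) = P$, and the local rigidity condition $D\alpha_{P,\theta}(0) = \mathrm{id}_{T_P\Gr_{k,m}}$ --- and then to observe that letting $\theta$ range over $O(m)$ produces the claimed set of retractions. I would first record three elementary facts. Any $\xi \in T_P\Gr_{k,m}$ has the form $\xi = [P,\Omega]$ with $\Omega \in \mathfrak{u}(m)$, and a one-line computation shows $\xi$ is symmetric and obeys $P\xi + \xi P = \xi$, hence also $P\xi P = 0$. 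It follows that $[\xi,P] = \xi P - P\xi$ is skew-symmetric, and so is $\theta^\top [\xi,P]\theta$ for every $\theta$. Finally, $I_m + t\Theta$ is invertible for all $t \in \R$ when $\Theta \in \mathfrak{u}(m)$ (its eigenvalues are $1 + t\lambda$ with $\lambda$ purely imaginary), so $q_{\theta^\top[\xi,P]\theta}$ is well defined and, by the stated smoothness of the $QR$-decomposition, smooth.

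Writing $W_\theta(\xi) := \theta\, q_{\theta^\top[\xi,P]\theta}(1)\, \theta^\top$, which lies in $O(m)$, the definition becomes $\alpha_{P,\theta}(\xi) = W_\theta(\xi)\, P\, W_\theta(\xi)^\top$; this matrix is symmetric, idempotent, and of rank $\rk(P) = k$, hence a point of $\Gr_{k,m}$, so $\alpha_{P,\theta}$ is well defined. Smoothness as a map on $T\Gr_{k,m}$ follows because $A \mapsto A_Q$ is smooth on $Gl(m)$ and $(P,\xi) \mapsto [\xi,P]$ is bilinear. For the centering condition, $\xi = 0$ gives $\theta^\top[\xi,P]\theta = 0$, hence $q_0(1) = (I_m)_Q = I_m$, $W_\theta(0) = I_m$, and $\alpha_{P,\theta}(0) = P$.

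The local rigidity condition is the heart of the matter. Since $T_P\Gr_{k,m}$ is a vector space it suffices to differentiate $t \mapsto \alpha_{P,\theta}(t\xi)$ at $t = 0$ for arbitrary $\xi \in T_P\Gr_{k,m}$. With $\Theta := \theta^\top[\xi,P]\theta$ we have $\theta^\top[t\xi,P]\theta = t\Theta$, and the key observation $q_{t\Theta}(1) = q_\Theta(t) = (I_m + t\Theta)_Q$ together with $\dot q_\Theta(0) = \Theta$ gives $\tfrac{d}{dt}\big|_{t=0} W_\theta(t\xi) = \theta\,\Theta\,\theta^\top = [\xi,P]$. Differentiating $\alpha_{P,\theta}(t\xi) = W_\theta(t\xi)\,P\,W_\theta(t\xi)^\top$ by the product rule, using $W_\theta(0) = I_m$ and the skew-symmetry of $[\xi,P]$,
\[
D\alpha_{P,\theta}(0)[\xi] \;=\; [\xi,P]\,P + P\,[\xi,P]^\top \;=\; [\xi,P]\,P - P\,[\xi,P] \;=\; \bigl[[\xi,P],P\bigr],
\]
and expanding with $P^2 = P$ this equals $\xi P + P\xi - 2P\xi P$, which the identities $P\xi + \xi P = \xi$ and $P\xi P = 0$ collapse to $\xi$. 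Hence $D\alpha_{P,\theta}(0) = \mathrm{id}$.

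I expect the rigidity step to be the main obstacle: one must first establish the tangent-space characterization $P\xi + \xi P = \xi$ (equivalently $P\xi P = 0$), then spot the identity $q_{t\Theta}(1) = q_\Theta(t)$ so as to invoke $\dot q_\Theta(0) = \Theta$, and finally carry out the bracket algebra cleanly. A way to isolate this computation is to note that for fixed $\theta$ the conjugation $\iota_\theta \colon \Gr_{k,m} \to \Gr_{k,m}$, $P \mapsto \theta P \theta^\top$, is a diffeomorphism (in fact a Riemannian isometry) with inverse $\iota_{\theta^\top}$, and that $\alpha_{P,\theta} = \iota_\theta \circ \alpha'_{\theta^\top P \theta} \circ D\iota_{\theta^\top}$; since retractions are preserved under pre- and post-composition with such maps, the lemma reduces to the case $\theta = I_m$, i.e.\ to verifying that $\alpha'_P$ is a retraction --- which is precisely the argument above with $\theta = I_m$.
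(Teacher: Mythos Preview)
Your proof is correct and follows essentially the same route as the paper: both verify $\alpha_{P,\theta}(0)=P$ directly, then for the rigidity condition use the identity $q_{t\Theta}(1)=q_\Theta(t)$ together with $\dot q_\Theta(0)=\Theta$ and the product rule to arrive at $[[\xi,P],P]$, which is then identified with $\xi$. The only cosmetic difference is that the paper closes the last step by invoking the tangent-space projection formula $\pi_P(\xi)=[P,[P,\xi_s]]$ (so that $\xi$, being already tangent, is fixed), whereas you expand the bracket by hand via $P\xi+\xi P=\xi$ and $P\xi P=0$; these are the same fact. Your additional remarks on well-definedness, invertibility of $I_m+t\Theta$, and the conjugation viewpoint reducing to $\theta=I_m$ are sound extras the paper does not spell out.
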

\begin{proof}
The first condition, namely $\alpha_{P,\theta}(0)=P$, follows straightforwardly. It remains to show that $\tfrac{d}{dt}\big|_{t=0}{\alpha_{P,\theta}}(t \xi)=\xi$.\\
Firstly, note that $q_{\left[t\xi,P \right]}(1) = q_{\left[\xi,P \right]}(t)$. 
Since it has been verified that $\dot{q}_{\Omega}(0)= \Omega$ (cf.~\cite{Helmke07}) and $[\xi,P]$ is skew symmetric, it follows that
\begin{align*}
\frac{d}{dt}\big|_{t=0}{\alpha_{P,\theta}}(t \xi) &= \theta \dot{q}_{\theta^\top[\xi,P]\theta}(0)\theta^\top P \theta \big(q_{\theta^\top[\xi,P]\theta}(0)\big)^\top \theta^\top +
\theta q_{\theta^\top[\xi,P]\theta}(0) \theta^\top P \theta \big(\dot{q}_{\theta^\top[\xi,P]\theta}(0) \big)^\top \theta^\top\\
&=\theta \; \theta^\top [\xi,P] \theta \; \theta^\top P - P\theta \; \theta^\top [\xi,P] \theta \; \theta^\top\\
&=[P,[P,\xi]]\\
&= \xi
\end{align*}
To understand the last step of the equation, note that $\xi \in T_P \Gr_{k,m}$ and therefore, $\xi$ is invariant under the projection $\pi_P (\cdot).$
\hfill{$\Box$}
\end{proof}
As an associated vector transport, i.e. a mapping that for a given $\xi \in T_P \Gr_{k,m}$ transports the tangent element 
$\eta \in T_P \Gr_{k,m}$ along the retraction $\alpha_{P,\theta}(\xi)$ to the tangent space $T_{\alpha_{P,\theta}(\xi)} \Gr_{k,m}$, we choose
\begin{align}\label{eq:vectortrans}
\tau_{\xi,P,\theta} (\eta):= \theta \; \big( q_{\theta^\top[\xi,P]\theta}(1)\big) \; \theta^\top \eta \; \theta \; \big( q_{\theta^\top[\xi,P]\theta}(1)\big)^\top \theta^\top.
\end{align}
Note that in our algorithm the context of $\tau$ is always clear. Thus, we will drop the subscripts and simply write $\tau(\eta)$ for enhanced legibility.

\subsection{CG on the Grassmannian}\label{subsec:cg_on_grass}

In the following, we sketch how the well-known nonlinear CG method extends to the Grassmannian for minimizing a smooth function $f \colon \Gr_{k,m} \to \R$.
Recall that if $f$ is the restriction of a smooth function $\hat{f} \colon \R^{m \times m} \to \R$,
the Riemannian gradient in the tangent space is given by
\begin{align}
\label{eq:graddef}
\grad f (P) = \pi_P(\nabla \hat{f}),
\end{align}
where $\nabla \hat{f}$ is the common gradient of $\hat{f}$ in $\R^{m \times m}$.
The CG method on the Grassmannian can be outlined as follows. Starting at an initial point $P^{(0)} \in \Gr_{k,m}$, the Riemannian gradient $\Gamma^{(0)}$ can be computed and $\mathnormal{H}^{(0)} = -\Gamma^{(0)}$ is selected as initial search direction. In each iteration suitable step-size $t^{(i)}$ is determined using a backtracking line-search algorithm on the Grassmannian, cf. Algorithm \ref{alg:BTonGr}. The new iterate is then obtained via $P^{(i+1)} =  \alpha_{P^{(i)}}(t^{(i)} H^{(i)})$. Finally, the search direction
\begin{align}
\label{eq:dirupdateonGr}
H^{(i+1)} = -\Gamma^{(i+1)} + \beta^{(i)} \tau(H^{(i)})
\end{align}
is updated, where we consider two update rules for $ \beta^{(i)}$, namely
\begin{align}\label{eq:heststiefonGr}
\beta_{FR}^{(i)} = \frac{\langle \Gamma^{(i+1)},\Gamma^{(i+1)}\rangle}{\langle \Gamma^{(i)}, \Gamma^{(i)}\rangle}, \quad
\beta_{HS}^{(i)} = \frac{\langle \Gamma^{(i+1)},(\Gamma^{(i+1)} - \tau (\Gamma^{(i)}))\rangle}{\langle \tau (H^{(i)}),(\Gamma^{(i+1)} - \tau (\Gamma^{(i)}))\rangle}.
\end{align}
The former is a Riemannian adaption of the well-known Fletcher-Reeves update formula and guarantees convergence of our algorithm (see the subsequent remark). The latter is an adaptation of the Hestenes-Stiefel formula and typically leads to better convergence behavior in practice, which is why we use it for all our algorithms.

\begin{algorithm}[H]
\caption{Backtracking line search on Grassmannian}
\label{alg:BTonGr}
\begin{algorithmic}
	\STATE Choose $t_{init} > 0; c,\rho \in (0,1)$ and set $t \gets t_{init}$
	\REPEAT 
	\STATE $t \gets \rho t$
	\UNTIL $f \left( \alpha_P(t H^{(i)})\right) \leq f(P)+ c\, t \, \tr(\Gamma^{(i)\,\top} H^{(i)})$
	\STATE Choose step-size $t^{(i)}:=t$
\end{algorithmic}
\end{algorithm}
\noindent \textit{Remark} Convergence of the geometric CG method, which uses the retraction and vector transport as described above together with the Fletcher-Reeves update $\beta_{FR}$ and the strong Wolfe-Powell condition, is guaranteed by a result of \cite{Ring2012} in the sense that $\liminf_{i\to\infty}  \| {\Gamma}^{(i)} \| = 0$. 
Although we do not explicitly examine the strong Wolfe-Powell condition in Alg.~\ref{alg:BTonGr} due to the increased computational costs, convergence
 behavior is observed in practice.
%
\subsection{Implementation of CG on Grassmannian}
So far, the algorithm outlined above requires full $(m \times m)$-matrices for the iterates $P^{(i)}, \Gamma^{(i)}$ and $H^{(i)}$. This is a drastic limitation on the performance of a practical implementation. In this section we derive a new retraction and show how it can be used to avoid full matrix multiplication and to reduce the storage requirements tremendously. The key idea is to decompose the projection matrices $P \in \Gr_{k,m}$ into $P=UU^\top$ and to iterate on Stiefel matrices $U \in \St_{k,m}$ instead. Moreover, one can exploit the structure of the tangent space $T_\stpro \Gr_{k,m}$ at the standard projector $\stpro$, that is
\begin{align}
\stpro = \begin{bmatrix} I_k & 0 \\ 0 & 0 \end{bmatrix}, \quad 
T_\stpro \Gr_{k,m}= \left\{ \begin{bmatrix} 0 & A^\top \\ A & 0 \end{bmatrix} ~\Big |~ A \in \R^{(m-k) \times k} \right\}.
\end{align}
Given $U$, a large $QR$-decomposition of $U$ yields a fast way of constructing
\begin{align}\label{eq:Vdef}
V:=\begin{bmatrix}U & \vrule & U^\perp \end{bmatrix} \in O(m),
\end{align}
where $U^\perp \in \St_{(m-k),m}$ denotes a basis of the orthogonal complement of the subspace spanned by $U$.
Then, if $P = U U^\top$ and $\xi \in T_P \Gr_{k,m}$, the identities
\begin{align}
\label{eq:stpro}
V^\top P V = \stpro  \quad \text{and} \quad V^\top \xi V = \begin{bmatrix} 0 & A^\top \\ A & 0 \end{bmatrix}
\end{align}
hold for some $A \in \R^{(m-k) \times k}$. Therefore, instead of storing the full $P$ and $\xi$ it is sufficient to store $U$ and $A$. Formally, this defines the bijection
\begin{align}
\label{eq:conV}
\con_V \colon T_P \Gr_{k,m} \to \R^{(m-k) \times k}, \; \con_V(\xi)= A.
\end{align}
The projection onto $T_P \Gr_{k,m}$ follows from a straightforward calculation.

\begin{lemma}
Let $V=\begin{bmatrix}U & \vrule & U^\perp \end{bmatrix}$,$P=UU^\top$ and $Z_s = \tfrac{1}{2}(Z + Z^\top)$ be defined as above and the orthogonal projection onto the tangent space at $P$ be denoted by $\pi_P$. Then the identity
\begin{align}
\con_V\left(\pi_P (Z)\right) = (U^\perp)^\top Z_s U
\end{align}
holds.
\end{lemma}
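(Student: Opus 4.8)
The plan is to transport the whole computation to the base point $\stpro$ by conjugating with $V$, where everything becomes an explicit $2\times 2$ block manipulation. Recall from \eqref{eq:projectanspace} that $\pi_P(Z)=[P,[P,Z_s]]$ and from \eqref{eq:stpro} that $V^\top P V=\stpro$. Since $V$ is orthogonal, the map $M\mapsto V^\top M V$ satisfies $V^\top[A,B]V=[V^\top A V,\,V^\top B V]$, i.e.\ it commutes with the Lie bracket. Hence
\begin{align*}
V^\top \pi_P(Z)\,V \;=\; \big[\,\stpro,\,[\,\stpro,\,V^\top Z_s V\,]\,\big],
\end{align*}
and by the definition \eqref{eq:conV} of $\con_V$ it only remains to extract the lower-left $(m-k)\times k$ block of the right-hand side.

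Next I would write $W:=V^\top Z_s V=\left[\begin{smallmatrix} W_{11}&W_{12}\\ W_{21}&W_{22}\end{smallmatrix}\right]$ with $W_{11}\in\R^{k\times k}$; since $Z_s$ is symmetric and $V$ orthogonal, $W$ is symmetric, so $W_{21}=W_{12}^\top$. With $\stpro=\left[\begin{smallmatrix} I_k&0\\0&0\end{smallmatrix}\right]$ a direct multiplication gives $[\stpro,W]=\left[\begin{smallmatrix} 0&W_{12}\\ -W_{21}&0\end{smallmatrix}\right]$, and bracketing once more with $\stpro$ yields $[\stpro,[\stpro,W]]=\left[\begin{smallmatrix} 0&W_{12}\\ W_{21}&0\end{smallmatrix}\right]$. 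In particular the result has the anti-block-diagonal shape demanded in \eqref{eq:stpro}, which confirms that $\con_V$ is applicable, and its lower-left block is $W_{21}$. Finally, because $V=[\,U\mid U^\perp\,]$, the block $W_{21}$ is exactly $(U^\perp)^\top Z_s U$, which is the asserted identity.

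I do not expect a genuine obstacle here: the two mildly delicate points are that conjugation by $V$ commutes with the Lie bracket — immediate from $V^\top V=I_m$ — and the bookkeeping of the two $2\times 2$ block products, which is routine. Everything else follows directly from the already-established formulas \eqref{eq:projectanspace}, \eqref{eq:stpro} and \eqref{eq:conV}.
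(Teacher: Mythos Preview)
Your proposal is correct and follows essentially the same approach as the paper: conjugate by $V$ to reduce $\pi_P(Z)$ to $[\stpro,[\stpro,V^\top Z_s V]]$ and then read off the lower-left block. The paper's own proof is in fact terser---it stops after writing $V^\top\pi_P(Z)V=[\mathcal{I},[\mathcal{I},V^\top Z_s V]]$ and simply asserts that the structure of \eqref{eq:stpro} is obtained---so your explicit $2\times2$ block computation just fills in what the paper leaves to the reader.
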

\begin{proof}
Using the definition of $\pi_P (Z)$ and the fact that $V^\top P V = \mathcal{I}$,
\begin{align*}
V^\top \pi_P (Z) V &= [V^\top P V\;, V^\top[P,Z_s]V]\\
&=[\mathcal{I},[\mathcal{I},V^\top Z_s V]]
\end{align*} 
and the same structure as in \eqref{eq:stpro} is obtained.
\hfill{$\Box$}
\end{proof}
Lemma \ref{lemma:retract} allows to choose a particular retraction that is easy to compute. Consider
$A=\con_V (\xi)$ and let 
\begin{align}\label{eq:QRofA}
A = \theta_A \begin{bmatrix} R\\  0 \end{bmatrix}
\end{align}
 be the large $QR$-decomposition of $A$, with $\theta_A \in O(m-k)$ and $R$ an upper triangular (not necessarily invertible) $(k \times k)$-matrix.
Furthermore, define
\begin{align}
\label{eq:Mmatrix}
M(R) :=
\begin{bmatrix}
I_k & -R^\top \\ R & I_k
\end{bmatrix}
\end{align}
and its $Q$-factor
$\theta_M \in O(2k).$
\begin{lemma}\label{lemma:retract_IMPL}
Let $\alpha_{P,\theta}(\xi)$ be a retraction as in \eqref{eq:retract}, where $\theta$ is chosen as
$\theta:=V \begin{bmatrix} I_k & 0 \\ 0 & \theta_A \end{bmatrix}.$
Then the Stiefel matrix
\begin{align}
\label{eq:stiefelupdate}
\widetilde{U}:= \theta 
\begin{bmatrix} \theta_M & 0\\ 0 & I_{m-2k} \end{bmatrix}
\begin{bmatrix} I_k \\ 0 \end{bmatrix} \in \St_{k,m} 
\end{align}
satisfies $\alpha_{P,\theta}(\xi)=\widetilde{U} \widetilde{U}^\top$.
\end{lemma}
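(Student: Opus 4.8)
The plan is to verify the claimed identity $\alpha_{P,\theta}(\xi)=\widetilde U\widetilde U^\top$ by direct computation, unwinding each of the three nested definitions: the retraction formula \eqref{eq:retract}, the $QR$-based map $q_\Omega$, and the block structure introduced via $\con_V$. First I would translate everything into the orthonormal frame $V$. Since $\theta=V\,\mathrm{diag}(I_k,\theta_A)$ is orthogonal (being a product of orthogonal matrices), and since $V^\top P V=\stpro$ and $V^\top\xi V=\bigl[\begin{smallmatrix}0 & A^\top\\ A & 0\end{smallmatrix}\bigr]$ with $A=\con_V(\xi)$ by \eqref{eq:stpro}--\eqref{eq:conV}, conjugating by $\mathrm{diag}(I_k,\theta_A)$ turns $A$ into $\bigl[\begin{smallmatrix}R\\ 0\end{smallmatrix}\bigr]$ by \eqref{eq:QRofA}. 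Hence $\theta^\top P\theta=\stpro$ and $\theta^\top\xi\,\theta$ has only the leading $2k\times2k$ block nonzero, equal to $\bigl[\begin{smallmatrix}0 & R^\top\\ R & 0\end{smallmatrix}\bigr]$; consequently $\theta^\top[\xi,P]\theta=[\theta^\top\xi\theta,\stpro]$ is block-diagonal with an $(m-2k)$-trailing zero block, its leading $2k\times2k$ block being a skew-symmetric matrix built from $R$.

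The next step is to evaluate $q_{\theta^\top[\xi,P]\theta}(1)=(I_m+\theta^\top[\xi,P]\theta)_Q$. Because $\theta^\top[\xi,P]\theta$ is supported on the leading $2k\times2k$ block, so is $I_m+\theta^\top[\xi,P]\theta$ up to the trailing identity, and the $QR$-decomposition respects this block structure: the $Q$-factor is $\mathrm{diag}\bigl((\text{leading }2k\times2k\text{ block})_Q,\;I_{m-2k}\bigr)$. I would then identify the leading block. A short computation gives $[\,\bigl[\begin{smallmatrix}0 & R^\top\\ R & 0\end{smallmatrix}\bigr],\bigl[\begin{smallmatrix}I_k&0\\0&0\end{smallmatrix}\bigr]\,] = \bigl[\begin{smallmatrix}0 & -R^\top\\ R & 0\end{smallmatrix}\bigr]$, so $I_{2k}$ plus this is exactly $M(R)$ from \eqref{eq:Mmatrix}, whose $Q$-factor is $\theta_M$ by definition. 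Therefore $q_{\theta^\top[\xi,P]\theta}(1)=\mathrm{diag}(\theta_M,I_{m-2k})$, and plugging into \eqref{eq:retract},
\begin{align*}
\alpha_{P,\theta}(\xi)
= \theta\,\mathrm{diag}(\theta_M,I_{m-2k})\,\stpro\,\mathrm{diag}(\theta_M,I_{m-2k})^\top\theta^\top .
\end{align*}

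Finally I would recognize the right-hand side as $\widetilde U\widetilde U^\top$. Writing $\stpro=\bigl[\begin{smallmatrix}I_k\\0\end{smallmatrix}\bigr]\bigl[\begin{smallmatrix}I_k\\0\end{smallmatrix}\bigr]^\top$ and pushing the left factor through, we get $\alpha_{P,\theta}(\xi)=\bigl(\theta\,\mathrm{diag}(\theta_M,I_{m-2k})\bigl[\begin{smallmatrix}I_k\\0\end{smallmatrix}\bigr]\bigr)\bigl(\cdots\bigr)^\top$, and the parenthesized matrix is precisely $\widetilde U$ from \eqref{eq:stiefelupdate}; it lies in $\St_{k,m}$ since it is $m$ columns\,$\times k$ of an orthogonal matrix. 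I would close by noting $\widetilde U$ has full column rank, so $\widetilde U\widetilde U^\top\in\Gr_{k,m}$, which is consistent with $\alpha_{P,\theta}(\xi)$ being a point on the Grassmannian by Lemma~\ref{lemma:retract}. The only mildly delicate point — and the one I would be most careful about — is the bookkeeping that $QR$-decomposition commutes with the block-diagonal structure when $R$ (and hence $M(R)$, $I_m+\theta^\top[\xi,P]\theta$) may be singular: one should invoke the existence (not uniqueness) of a $QR$-factorization in the rank-deficient case, which is all that $\theta_M$ and $\theta_A$ require, and check that the trailing $I_{m-2k}$ block genuinely contributes an identity $Q$-factor so that the computation above is exact rather than merely up to an orthogonal factor on the trailing block.
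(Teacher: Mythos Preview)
Your proposal is correct and follows essentially the same route as the paper: compute $\theta^\top[\xi,P]\theta$ to exhibit the block structure $\mathrm{diag}(M(R)-I_{2k},\,0)$, identify $q_{\theta^\top[\xi,P]\theta}(1)=\mathrm{diag}(\theta_M,I_{m-2k})$, and then factor $\stpro=\bigl[\begin{smallmatrix}I_k\\0\end{smallmatrix}\bigr]\bigl[\begin{smallmatrix}I_k\\0\end{smallmatrix}\bigr]^\top$ to read off $\widetilde U$. Your closing caution about singularity is unnecessary, since $M(R)$ is always invertible (its determinant is $\det(I_k+R^\top R)>0$), so the unique $QR$-factorization respects the block-diagonal structure exactly.
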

\begin{proof}
One can deduce
$q_{\theta^\top[\xi,P]\theta}(1) = (I_m + \theta^\top[\xi,P]\theta)_Q =\begin{bmatrix} \theta_M & 0 \\ 0 & I_{m-2k}  \end{bmatrix}$ from

\begin{align*}
\theta^\top [\xi,P] \theta &=
\begin{bmatrix} I_k & \\ & \theta_A^\top \end{bmatrix} [V^\top \xi V, V^\top P V]
\begin{bmatrix} I_k & 0\\ 0 & \theta_A \end{bmatrix}\\
&= 
\begin{bmatrix} I_k & 0\\ 0 & \theta_A^\top \end{bmatrix}
\begin{bmatrix} 0 & -A^\top\\ A & 0 \end{bmatrix}
\begin{bmatrix} I_k & 0 \\ 0 & \theta_A \end{bmatrix}\\
&=
\begin{bmatrix} 0 & -A^\top \theta_A\\ \theta_A^\top A & 0 \end{bmatrix}\\
&=
\begin{bmatrix} 0 &
\begin{bmatrix}-R^T & 0 \end{bmatrix}\\
\begin{bmatrix}R\\0 \end{bmatrix} & 0 \end{bmatrix} .
\end{align*}
Therefore, the retraction is
\begin{align*}
\alpha_{P,\theta}(\xi) &= \theta 
\begin{bmatrix} \theta_M & 0 \\ 0 & I_{m-2k}  \end{bmatrix} \theta^\top P \theta 
\begin{bmatrix} \theta_M^\top & 0 \\ 0 & I_{m-2k}  \end{bmatrix} \theta^\top\\
&=V
\begin{bmatrix} I_k & 0 \\ 0 & \theta_A \end{bmatrix} 
\begin{bmatrix} \theta_M & 0 \\ 0 & I_{m-2k} \end{bmatrix}  \underbrace{ 
\begin{bmatrix} I_k & 0 \\ 0 & \theta_A^\top \end{bmatrix} V^\top P V 
\begin{bmatrix} I_k & 0 \\ 0 & \theta_A \end{bmatrix}}_{\mathcal{I}}  
\begin{bmatrix} \theta_M^\top & 0 \\ 0 & I_{m-2k} \end{bmatrix}
\begin{bmatrix} I_k & 0 \\ 0 & \theta_A^\top \end{bmatrix} V^\top\\
&=V
\begin{bmatrix} I_k & 0 \\ 0 & \theta_A \end{bmatrix}
\begin{bmatrix} \theta_M & 0 \\ 0 & I_{m-2k} \end{bmatrix}
\begin{bmatrix} I_k\\ 0\end{bmatrix} \; \begin{bmatrix} I_k & 0\end{bmatrix} 
\begin{bmatrix} \theta_M^\top & 0 \\ 0 & I_{m-2k} \end{bmatrix}
\begin{bmatrix} I_k & 0 \\ 0 & \theta_A^\top \end{bmatrix} V^\top,
\end{align*}
which completes the proof.
\hfill{$\Box$}
\end{proof}

\noindent Similarly to the retraction, the vector transport from \eqref{eq:vectortrans} is simplified, as is described in the following.
\begin{lemma}
Let $\theta, \xi$, $P$, $\widetilde{U}$ be as above and $\widetilde{V}:= \begin{bmatrix}\widetilde{U} & \vrule & \widetilde{U}^\perp \end{bmatrix}$. Then for $\eta \in T_P \Gr_{k,m}$ and $B:=\con_V (\eta)$, the identity
$
\con_{\widetilde{V}}(\tau_{\xi,P,\theta}(\eta))=\theta_A^\top B
$
holds.
\end{lemma}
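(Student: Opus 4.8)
The plan is to collapse the conjugations in~\eqref{eq:vectortrans} in the same spirit as the proof of Lemma~\ref{lemma:retract_IMPL}. Writing $q := q_{\theta^\top[\xi,P]\theta}(1)$, the vector transport reads $\tau_{\xi,P,\theta}(\eta) = (\theta q)\,(\theta^\top \eta\,\theta)\,(\theta q)^\top$. From the computation already carried out in the proof of Lemma~\ref{lemma:retract_IMPL} one has $q = \begin{bmatrix}\theta_M & 0\\ 0 & I_{m-2k}\end{bmatrix}$, so $W := \theta q = \theta\begin{bmatrix}\theta_M & 0\\ 0 & I_{m-2k}\end{bmatrix}\in O(m)$ and, by~\eqref{eq:stiefelupdate}, $\widetilde U$ consists precisely of the first $k$ columns of $W$. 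The first thing I would do is fix the companion basis accordingly, i.e.\ take $\widetilde V := W$ (equivalently, let $\widetilde U^\perp$ be the last $m-k$ columns of $W$); this is also the matrix one actually assembles in the implementation, since $\theta$ and $\theta_M$ are already at hand. With this choice $\widetilde V\,\stpro\,\widetilde V^\top = \widetilde U\widetilde U^\top = \alpha_{P,\theta}(\xi)$, consistently with Lemma~\ref{lemma:retract_IMPL}.

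Next I would compute $\widetilde V^\top\, \tau_{\xi,P,\theta}(\eta)\, \widetilde V$. Since $\tau_{\xi,P,\theta}(\eta) = W\,(\theta^\top\eta\,\theta)\,W^\top$ and $\widetilde V = W$ is orthogonal, the outer factors cancel, so $\widetilde V^\top\, \tau_{\xi,P,\theta}(\eta)\, \widetilde V = \theta^\top\eta\,\theta$. Substituting $\theta = V\begin{bmatrix}I_k & 0\\ 0 & \theta_A\end{bmatrix}$ and using $\con_V(\eta) = B$, that is $V^\top\eta\, V = \begin{bmatrix}0 & B^\top\\ B & 0\end{bmatrix}$ by~\eqref{eq:stpro}, a direct block multiplication gives $\theta^\top\eta\,\theta = \begin{bmatrix}I_k & 0\\ 0 & \theta_A^\top\end{bmatrix}\begin{bmatrix}0 & B^\top\\ B & 0\end{bmatrix}\begin{bmatrix}I_k & 0\\ 0 & \theta_A\end{bmatrix} = \begin{bmatrix}0 & (\theta_A^\top B)^\top\\ \theta_A^\top B & 0\end{bmatrix}$.

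Finally I would read off the result by comparing with the defining block form of $\con_{\widetilde V}$ in~\eqref{eq:stpro} and~\eqref{eq:conV}: the matrix $\widetilde V^\top\, \tau_{\xi,P,\theta}(\eta)\, \widetilde V$ has exactly the shape $\begin{bmatrix}0 & C^\top\\ C & 0\end{bmatrix}$ with $C = \theta_A^\top B\in\R^{(m-k)\times k}$, which simultaneously confirms that $\tau_{\xi,P,\theta}(\eta)\in T_{\widetilde U\widetilde U^\top}\Gr_{k,m}$ (so that $\con_{\widetilde V}$ is meaningfully applied) and yields $\con_{\widetilde V}(\tau_{\xi,P,\theta}(\eta)) = \theta_A^\top B$. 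I expect the only genuine subtlety to be the bookkeeping: pinning down the companion basis as $\widetilde V = \theta q$ so that the conjugation telescopes, and keeping the block positions and transposes straight; once that is set up, no estimates or further structural arguments are needed.
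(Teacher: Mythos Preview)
Your proof is correct and follows essentially the same route as the paper's: both fix $\widetilde{V}=\theta\begin{bmatrix}\theta_M & 0\\ 0 & I_{m-2k}\end{bmatrix}$, conjugate $\tau_{\xi,P,\theta}(\eta)$ by $\widetilde V$, and read off the lower-left block after substituting $\theta=V\begin{bmatrix}I_k & 0\\ 0 & \theta_A\end{bmatrix}$ and $V^\top\eta V=\begin{bmatrix}0 & B^\top\\ B & 0\end{bmatrix}$. Your presentation is slightly more streamlined in that you name $W:=\theta q$ up front and observe the cancellation $\widetilde V^\top W(\cdot)W^\top\widetilde V = (\cdot)$ in one stroke, whereas the paper writes out the intermediate block products explicitly; the content is the same.
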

\begin{proof}
From \eqref{eq:stpro} we have $V^\top \eta V = \begin{bmatrix} 0 & B^\top\\ B & 0\end{bmatrix}$. The vector transport can thus be written as
\begin{align*}
\tau_{\xi,P,\theta}(\eta) =\theta 
\begin{bmatrix} \theta_M & 0 \\ 0 & I_k \end{bmatrix}
\begin{bmatrix} I_k & 0 \\ 0 & \theta_A^\top \end{bmatrix} \; 
\begin{bmatrix} 0 & B^\top\\ B & 0\end{bmatrix} \; 
\begin{bmatrix} I_k & 0 \\ 0 & \theta_A \end{bmatrix}
\begin{bmatrix} \theta_M^\top & 0 \\ 0 & I_k \end{bmatrix} \theta^\top.
\end{align*}
Using the fact that
\begin{align*}
\theta^\top 
\begin{bmatrix}\widetilde{U} & \vrule & \widetilde{U}^\perp \end{bmatrix} &= \theta^\top
\begin{bmatrix}\theta
\begin{bmatrix} \theta_M & 0 \\ 0 & I_{m-2k} \end{bmatrix}
\begin{bmatrix} I_k\\ 0 \end{bmatrix} \; & \vrule & \; \theta
\begin{bmatrix} \theta_M & 0 \\ 0 & I_{m-2k} \end{bmatrix}
\begin{bmatrix} 0\\ I_{m-k} \end{bmatrix}
\end{bmatrix}
=
\begin{bmatrix} \theta_M & 0 \\ 0 & I_{m-2k} \end{bmatrix}
\end{align*}
we can show that
\begin{align*}
\widetilde{V}^\top \tau_{\xi,P,\theta}(\eta) \widetilde{V} &=
\begin{bmatrix}\widetilde{U} & \vrule & \widetilde{U}^\perp \end{bmatrix}^\top \theta 
\begin{bmatrix} \theta_M & 0 \\ 0 & I_{m-2k} \end{bmatrix}
\begin{bmatrix} I_k & 0 \\ 0 & \theta_A^\top \end{bmatrix} 
\begin{bmatrix} 0 & B^\top\\ B & 0\end{bmatrix}
\begin{bmatrix} I_k & 0 \\ 0 & \theta_A \end{bmatrix}
\begin{bmatrix} \theta_M^\top & 0 \\ 0 & I_{m-2k} \end{bmatrix} \theta^\top
\begin{bmatrix}\widetilde{U} & \vrule & \widetilde{U}^\perp\end{bmatrix}\\
&= 
\begin{bmatrix} I_k & 0 \\ 0 & \theta_A^\top \end{bmatrix} 
\begin{bmatrix} 0 & B^\top\\ B & 0\end{bmatrix}
\begin{bmatrix} I_k & 0 \\ 0 & \theta_A \end{bmatrix}\\
&=
\begin{bmatrix} 0 & B^\top \theta_A\\ \theta_A^\top B & 0\end{bmatrix}
\end{align*}
and $\con_{\widetilde{V}}(\tau_{\xi,P,\theta}(\eta))$ follows again from the matrix structure.
\hfill{$\Box$}
\end{proof}

Algorithm \ref{alg:stiefel_cg} illustrates how the minimization of \eqref{eq:prob_infeas2} on the Grassmannian is efficiently implemented in practice. Note that therein, $H$ and $G$ denote the preimages of the respective bijection $\con_V$ and thus are of dimension $(m-k) \times k$.
We do not further discuss the CG method in the Euclidean space as it is a standard method. In all minimization procedures, the convergence is evaluated by observing the progress in decreasing the cost function.
\begin{algorithm}[H]
\caption{Implementation of CG on $Gr_{k,m}$}
\label{alg:stiefel_cg}
 \begin{algorithmic}
\INPUT $U = U^{(i)}$
\vspace{6pt}
\STATE Obtain $V$ and $U^{\perp}$ from Eq. \eqref{eq:Vdef}
\STATE Compute $G = (U^\perp)^\top \nabla \tilde{f}(UU^\top) U$
\STATE $H = -G$
\REPEAT
   \STATE Obtain $\theta_H,R$ from $H$ as in Eq. \eqref{eq:QRofA}
   \STATE Determine step-size $t$ acc. Algorithm \ref{alg:BTonGr}
   \STATE Obtain $\theta_M$ from $QR$-dec.~of $M(t R)$ \eqref{eq:Mmatrix}
   \STATE Update $U$ according to \eqref{eq:stiefelupdate}
   \STATE Update $V, U^\perp,G$ as above
   \STATE Compute $\tau (G) = \theta_H^\top G$ and $\tau (H) = \theta_H^\top H$
   \STATE Update $H$ following \eqref{eq:dirupdateonGr} and \eqref{eq:heststiefonGr}
\UNTIL{converged}
\vspace{6pt}
\OUTPUT $U^{(i+1)}$
 \end{algorithmic}
\end{algorithm}

\section{Robust subspace tracking}

In this section, we extend the aforementioned mathematical tools to robustly track the underlying subspace. To that end, we require our sparsity measure $h$ to be separable, meaning that the sparsity measure of a matrix $A=[a_1,a_2,\dots,a_n]$ consists of the sum of sparsity measures of its columns $a_i$. Note that all sparsity measures given in Eq.~\eqref{eq:l0surrogates} fulfill that condition. By slight abuse of notation, we write
\begin{align}
h\Big(\begin{bmatrix}A & \vrule & a \end{bmatrix}\Big)=h(A)+h(a),
\end{align}
where $a \in \R^m$ is the last column of the matrix $\begin{bmatrix}A & \vrule & a \end{bmatrix}$. Assume now that the current observation matrix $\hat{X}^{(i)}$ is updated by a new observation vector $\hat{x}^{(i+1)}$, leading to a new observation matrix $\hat{X}^{(i+1)} = \begin{bmatrix}\hat{X} & \vrule & \hat{x} \end{bmatrix}$. Following  \eqref{eq:prob_infeas2}, the new optimization problem is 
\begin{align}\label{eq:prob_infeas2_tracking}
\Min{U \in \St_{k,m},Y \in  \R^{k \times n},y \in \R^k} & h\Big(\begin{bmatrix}\hat{X} & \vrule & \hat{x} \end{bmatrix}-\mathcal{A}\Big(U \begin{bmatrix}Y & \vrule & y \end{bmatrix}\Big)\Big).
\end{align}
We show how the above problem can profit from knowledge of the optimal subspace in the previous iteration. Due to the separability of $h$ and the properties of $\mathcal{A}$ we can separate the current observation from the previous optimization problem by rewriting
\begin{align}
h\Big(
\begin{bmatrix}\hat{X} & \vrule & \hat{x} \end{bmatrix}-\mathcal{A}\Big(U 
\begin{bmatrix}Y & \vrule & y \end{bmatrix}\Big)\Big) = h(\hat{X}-\mathcal{A}(U Y)) + h(\hat{x}-\mathcal{A}(U y)).
\end{align}
Furthermore, we introduce a weighting factor $0 < w < 1$ for incoming observations, which will play the role of a \emph{forgetting factor} in the actual tracking algorithm.
The optimization problem for subspace tracking now reads as
\begin{align}\label{eq:prob_infeas2_tracking2}
\Min{U \in \St_{k,m},Y \in  \R^{k \times n},y \in \R^k} & (1-w)h(\hat{X}-\mathcal{A}(U Y)) + w h(\hat{x}-\mathcal{A}(U y)).
\end{align}
Assume $P^{(i)}$ and $L^{(i)} = U^{(i)} Y^{(i)}$ that minimize $h(\hat{X}-\mathcal{A}(L))$ and an initial estimate $l_0 = P^{(i)}x_0$ with $\mathcal{A}(x_0) = \hat{x}^{(i+1)}$ are available for the current data vector. Then a two-step update rule similar to \eqref{eq:opt_over_GR} and \eqref{eq:opt_over_R} can be formulated. In a first step, update the dominant subspace estimate
\begin{align}
\label{eq:tracking_U}
P^{(i+1)} \approx \Argmin{P \in \Gr_{k,m}}{(1-w)h(\hat{X}-\mathcal{A}(P L^{(i)})) + w h(\hat{x}-\mathcal{A}(P  l_0))}
\end{align}
via gradient descent on $Gr_{k,m}$ and decompose $P^{(i+1)}=U^{(i+1)}U^{(i+1) \top}$. Note that the gradient $\Gamma^{(i+1)}$ can be updated from the previous $\Gamma^{(i)}$ and the current observation. Then adjust the coordinates of the current low-rank estimate by optimizing
\begin{align}
\label{eq:tracking_y}
y^{(i+1)}= \Argmin{y\in \R^k}{(1-w)h(\hat{x}-\mathcal{A}(U^{(i+1)} y))}.
\end{align}

While \eqref{eq:tracking_y} can be solved in the same fashion as before by employing a CG method in Euclidean space, the solution of problem \eqref{eq:tracking_U} is approximated by one gradient descent step on the Grassmannian, i.e. 
\begin{align}
\label{eq:tracking_gradstep}
P^{(i+1)}=\alpha_{P^{(i)}}\Big(\beta \Gamma^{(i+1)}\Big) = \alpha_{P^{(i)}}\Big(\beta ((1-w)\Gamma^{(i)} + w \gamma^{(i+1)})\Big).
\end{align}
In this formula, $\beta$ is a suitable step size from standard gradient descent methods,
\begin{align}
\label{eq:defGamma}
\Gamma^{(i)}= grad_P h(\hat{X}-\mathcal{A}(P L^{(i)}))
\end{align}
is the Riemannian gradient at the preceding iteration, and
\begin{align}
\label{eq:defgamma}
\gamma^{(i+1)}= grad_P h(\hat{x}-\mathcal{A}(P l^{(i+1)}))
\end{align}
the Riemannian gradient for the current observation.
The tracking scheme is illustrated in Alg. \ref{alg:tracking}.
\begin{algorithm}[H]
   \caption{Subspace tracking}
\label{alg:tracking}
\begin{algorithmic}
   \STATE {\bfseries Initialize:}
   \STATE Select initial data set $\hat{X}_0$, consisting of at least $k$ data vectors.
   \STATE Estimate $U^{(0)}$ and $Y^{(0)}$ following the alternating minimization outlined in Alg. \ref{alg:noisefree}
   \STATE $L^{(0)} = U^{(0)}Y^{(0)}$
   \STATE $P^{(0)} = U^{(0)}U^{(0)\,\top}$
   \STATE Compute $\Gamma^{(0)} (\hat{X}_0,P^{(0)},L^{(0)})$ according to \eqref{eq:defGamma}
\FOR{each new sample $\hat{x} = \hat{x}^{(i+1)}$}
   \STATE Choose $w$ and $\mu$ as well as $x_0$, s.t.~$\mathcal{A}(x_0) = \hat{x}$.
   \STATE Initialize $y_0 = U^{(i)\; \top}x_0$ and $l_0= U^{(i)}y_0$
   \STATE Compute $\gamma^{(i+1)}(x_0,P^{(i)},l_0)$  according to \eqref{eq:defgamma}
   \STATE Update $\Gamma^{(i+1)}=(1-w)\Gamma^{(i)} + w \gamma^{(i+1)}$
   \STATE Update $P^{(i+1)} = \alpha_{P^{(i)}}\Big(\beta \Gamma^{(i+1)}\Big) $\quad cf.~\eqref{eq:tracking_U}, \eqref{eq:tracking_gradstep}
   \STATE find $U^{(i+1)}$ \; s.t. \; $U^{(i+1)}U^{(i+1)\,\top} = P^{(i+1)}$
   \STATE $y^{(i+1)}= \Argmin{y\in \R^k}{h_\mu(\hat{x}-\mathcal{A}(U^{(i+1)} y))}$ \quad cf.~\eqref{eq:tracking_y} 
   \STATE $l^{(i+1)} = U^{(i+1)}y^{(i+1)}$
\ENDFOR
\end{algorithmic}
\end{algorithm}
To further reduce computational cost, the following lemma proves useful.
\begin{lemma}
Define $V=\begin{bmatrix}U & \vrule & U^\perp \end{bmatrix}$ and the Riemannian Gradients $\Gamma$ and $\gamma$ as above and denote by $G = (U^\perp)^\top \Gamma U$ and $g = (U^\perp)^\top \gamma U$ their respective preimages according to \eqref{eq:conV}. Then the identity
\begin{align*}
\con_V ((1-w)\Gamma + w \gamma) = (1-w)G + wg
\end{align*}
holds.
\end{lemma}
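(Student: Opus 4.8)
The plan is to reduce the statement to the linearity of the parametrisation map $\con_V$ on the tangent space. The two ingredients are: (i) $T_P\Gr_{k,m}$ is a linear subspace of $\R^{m\times m}$, so that the affine combination $(1-w)\Gamma+w\gamma$ stays inside it; and (ii) $\con_V$ is the restriction to $T_P\Gr_{k,m}$ of a linear map on matrices, hence commutes with such combinations.

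First I would record that $\Gamma$ and $\gamma$, being Riemannian gradients at the common point $P=UU^\top$ (cf.~\eqref{eq:graddef}, \eqref{eq:defGamma} and \eqref{eq:defgamma}), both lie in $T_P\Gr_{k,m}=\{[P,\Omega]\mid\Omega\in\mathfrak{u}(m)\}$. Since this set is the image of the linear map $\Omega\mapsto[P,\Omega]$, it is a vector space; therefore $(1-w)\Gamma+w\gamma\in T_P\Gr_{k,m}$, and $\con_V$ is legitimately defined on it.

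Next I would make $\con_V$ explicit on the tangent space. By \eqref{eq:stpro}, for any $\xi\in T_P\Gr_{k,m}$ one has $V^\top\xi V=\begin{bmatrix}0 & A^\top\\ A & 0\end{bmatrix}$ with $A=\con_V(\xi)$; partitioning $V=\begin{bmatrix}U & \vrule & U^\perp\end{bmatrix}$ block-wise identifies $A$ with the $(2,1)$-block $(U^\perp)^\top\xi U$. In particular this confirms $\con_V(\Gamma)=(U^\perp)^\top\Gamma U=G$ and $\con_V(\gamma)=(U^\perp)^\top\gamma U=g$, matching the notation of the lemma.

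Finally, linearity closes the argument: $\xi\mapsto V^\top\xi V$ is linear and extracting a fixed block is linear, so $\con_V$ is linear on $T_P\Gr_{k,m}$, whence
\[
\con_V\big((1-w)\Gamma+w\gamma\big)=(1-w)\con_V(\Gamma)+w\con_V(\gamma)=(1-w)G+wg .
\]
I do not expect a genuine obstacle here; the only points requiring a moment's care are checking that the affine combination does not leave $T_P\Gr_{k,m}$ (so that $\con_V$, defined only there, is applicable) — immediate once one recalls the tangent space is a linear subspace — and verifying that the $G,g$ in the statement really are the $\con_V$-preimages of $\Gamma,\gamma$, which is exactly the block identification above.
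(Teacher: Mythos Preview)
Your proposal is correct and takes essentially the same approach as the paper: the paper's entire proof is the single line ``The proof follows from the linearity of \eqref{eq:conV},'' and your argument is just a careful unpacking of precisely that --- verifying that the affine combination remains in $T_P\Gr_{k,m}$ so that $\con_V$ applies, and that $\con_V$ is (the restriction of) a linear map. Your version is more detailed than the paper's, but the idea is identical.
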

The proof follows from the linearity of \eqref{eq:conV}.\\

Using this result, the computational effort reduces tremendously, because explicit computation of $\gamma^{(i+1)}$ and \mbox{$\alpha_{P^{(i)}}(\beta \Gamma^{(i+1)})$} is not required for the new iterate $U^{(i+1)}$. Considering \eqref{eq:costfct_P} for the case of a single data vector, $\gamma$ and with it $g$ become rank-one matrices. Thus, it is sufficient to compute $G$ and its $QR$-decomposition in the initialization phase and then to perform lightweight rank-one updates (cf. \cite{Golub96}) on the $Q$ and $R$ factors to update $G$ and $U$ in each iteration. The detailed procedure is illustrated in Algorithm \ref{alg:rankoneupdate}.
\begin{algorithm}[H]
\caption{Implementation of the Gradient descent update}
\label{alg:rankoneupdate}
 \begin{algorithmic}
\INPUT $x_0,l_0,P^{(i)},V=\begin{bmatrix}U^{(i)} & \vrule & U^{\perp\;(i)} \end{bmatrix}$, $\theta_G^{(i)}$ and $R^{(i)}$ from the $QR$-decomposition of $G^{(i)}$ as in \eqref{eq:QRofA}
\vspace{6pt}
\STATE Compute $\gamma(x_0,P^{(i)},l_0)$ and $g = \con_V(\gamma)$
\STATE Find $u$, $v$ s.t. $uv^\top = g$
\STATE Obtain $\theta_G^{(i+1)}$ and $R^{(i+1)}$ via rank-one-update from $\theta_G^{(i)}, R^{(i)}, u$ and $v$
\STATE Compute $\theta=V \begin{bmatrix} I_k & 0 \\ 0 & \theta_G^{(i+1)} \end{bmatrix}$
\STATE Determine optimum step-size $t$ from line-search along $H=-G^{(i+1)}$
\STATE Obtain $\theta_M$ from $QR$-decomposition of $M(tR^{(i+1)})$ \eqref{eq:Mmatrix}
\STATE Update U according to \eqref{eq:stiefelupdate}
\vspace{6pt}
\OUTPUT $U^{(i+1)}, \theta_G^{(i+1)}, R^{(i+1)}$
 \end{algorithmic}
\end{algorithm}

\section{Experiments and Evaluation}
This section gives an overview of the actual implementation and the performance of the presented algorithms. Firstly, we refer to practical issues such as the selection of suitable penalty functions and the choice of parameters. In the following, we evaluate the performance of our approach in the Robust PCA task and demonstrate how the proposed method outperforms other state-of-the art algorithms. Concluding the experiments, we illustrate the method's behavior on real-world data by tracking a low-dimensional subspace in a visual background reconstruction task.

\subsection{Smooth penalty functions}
\label{sec:penfuns}
Inspired by the work of \cite{Gasso2009}, we investigate the following smoothed $\ell_0$-surrogates:
\begin{subequations}
\label{eq:l0surrogates}
\begin{align}
\label{eq:lpnorm}
h^\textrm{lp}_{\mu} \colon \R^{m \times n} &\to \R^+, \quad X \mapsto \sum_{j=1}^n\sum_{i=1}^m \left(x_{ij}^2 + \mu\right)^\frac{p}{2}, \quad 0 < p < 1 \quad \text{(lpnorm)}\\
\label{eq:lognorm}
h^\textrm{log}_{\mu} \colon \R^{m \times n} &\to \R^+, \quad X \mapsto \sum_{j=1}^n\sum_{i=1}^m \mbox{log}\left(1 + \tfrac{x_{ij}^2}{\mu} \right) \quad \text{(logarithm)}\\
\label{eq:atansquare}
h^\textrm{atan}_{\mu} \colon \R^{m \times n} &\to \R^+, \quad X \mapsto \sum_{j=1}^n\sum_{i=1}^m \mbox{atan}^2\left(\tfrac{x_{ij}}{\mu}\right) \quad \text{(atan)}
\end{align}
\end{subequations}
Using this kind of cost function instead of e.g.~the $\ell_1$-norm has several advantages. The smoothing parameter $\mu$ can be tuned in a way that the cost function either penalizes large outliers (similar to the Frobenius norm) or enforces sparsity on the outliers regardless of their magnitude. Within a practical implementation it is observed that larger values of $\mu$ lead to faster convergence of the optimization algorithm while small values, as expected, lead to much sparser residuals. We profit from this flexibility by adjusting the smoothing parameter and thus the resulting sparsity. Specifically, in the subspace reconstruction task we prefer faster convergence in the beginning in order to quickly obtain a reliable rough estimate of the subspace and reduce $\mu$ after each alternation. In the limit $\mu \rightarrow 0$ the cost functions behave similarly as the $\ell_0$-norm, which leads to the best results, as the following experiments demonstrate.

\subsection{Numerical experiments for the subspace reconstruction task}
In order to have a quantitative measure for the recovery performance of the robust PCA algorithms, we conducted numerical experiments with artificial test data and employed the available ground truth for a quantitative evaluation.
We compare our algorithm to \emph{SpaRCS} \citep{Waters2011}, \emph{RPCA} (\emph{EALM} and \emph{IALM}) \citep{Lin2010}, \emph{GRASTA} \citep{He2012} and \emph{GoDec} \citep{Zhou2011}, and use MATLAB implementations provided by the authors.
In all experiments we constructed the test data $X \in \mathbb{R}^{m \times n}$ to be the sum of a matrix $L$ of fixed rank $k$ (with $k < m$) and a sparse matrix $S$. The low-rank component is obtained by computing the singular value decomposition $U \Sigma V^\top$ of a zero-mean, $\mathcal{N}(0,1)$-distributed random matrix, assigning zero to all singular values $\sigma_i, i = k+1 \dots m$ and reconstructing the matrix $F = U\tilde{\Sigma}V^\top$.
In order to control the magnitude of the entries for varying $k$ we scale the entries of $F$ to a unit sample standard deviation, obtaining a normalized $L = \tfrac{1}{\text{std}(F)} F$. The entries of $S$ are randomly placed and uniformly distributed in $[-5, 5]$, which makes a well-proportioned relation between data points and outliers.
Some other comparisons follow the data model of \cite{Candes2011}, where the low-rank matrix is the product of two $\mathcal{N}(0,\tfrac{1}{n})$-distributed random matrices. The entries of $S$ follow a Bernoulli distribution and thus surpass the amplitude of the data points by several orders. In our evaluation the outliers cannot be detected just from their magnitude, but they are still large enough to severely distort the estimated subspace.

\begin{figure}[ht]
\begin{center}
\subfigure[lpnorm]{\includegraphics[width=0.22\textwidth]{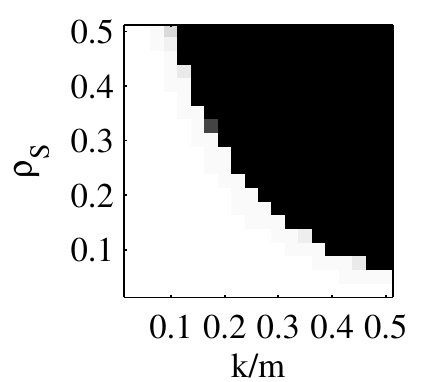}}
\subfigure[logarithm]{\includegraphics[width=0.22\textwidth]{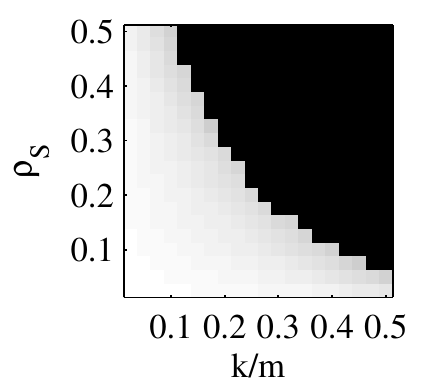}}
\subfigure[atan]{\includegraphics[width=0.22\textwidth]{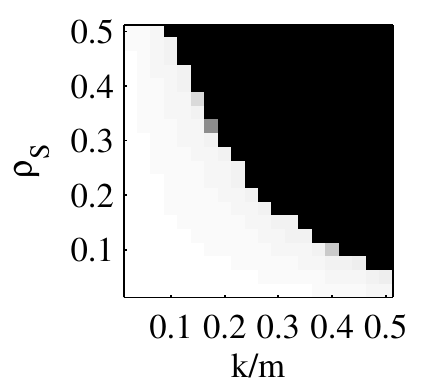}}
\subfigure[SpaRCS]{\includegraphics[width=0.22\textwidth]{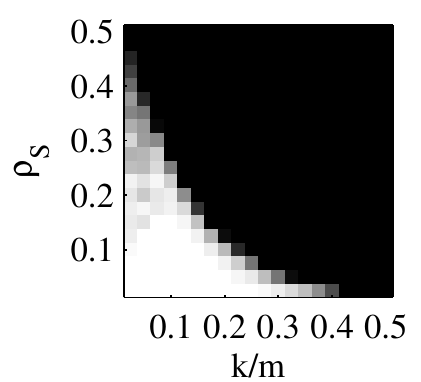}}\\
\subfigure[RPCA (EALM)]{\includegraphics[width=0.22\textwidth]{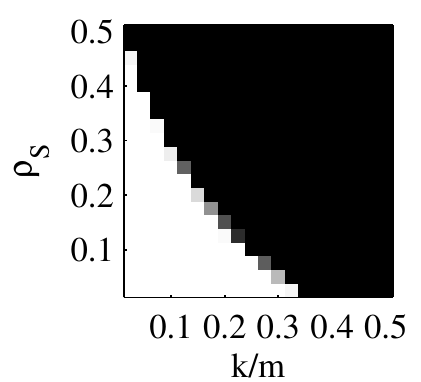}}
\subfigure[RPCA (IALM)]{\includegraphics[width=0.22\textwidth]{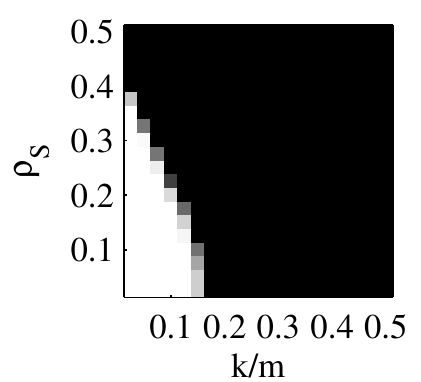}}
\subfigure[GRASTA]{\includegraphics[width=0.22\textwidth]{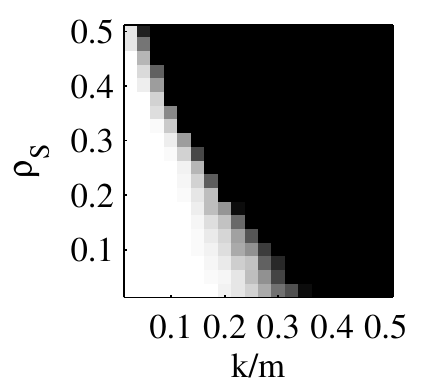}}
\subfigure[GoDec]{\includegraphics[width=0.22\textwidth]{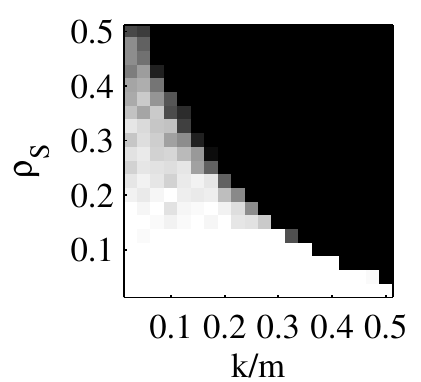}}
\caption{Subspace reconstruction performance measured by phase transitions in rank and sparsity (dark area: reconstruction failed). Our method (a)-(c) achieves the highest reconstruction rate.}
\label{fig:phase}
\end{center}
\end{figure}
A widely used performance measure for subspace reconstruction are the phase transitions in rank and sparsity. A data set of dimension $m = n = 400$ is generated and the subspace recovery is evaluated while varying both the relative rank $k/m$ and the relative sparsity $\rho_S=\|S\|_0/mn$ in the range of $0.025$ to $0.5$. The reconstruction is considered abortive if $\frac{\|L - \hat{L}\|_F}{\|L\|_F} > 0.05$.

The results in Figure \ref{fig:phase} demonstrate that the proposed framework (a)-(c) using $\ell_0$-surrogates covers a broad range of scenarios and surpasses \emph{SpaRCS}, \emph{GRASTA} and the \emph{RPCA} methods in the subspace reconstruction task. Especially in the cases of either $k/m$ being very small and $\rho_S$ being very large or vice versa, our algorithm is still able to recover the subspace while other methods fail. Out of all compared methods, the \emph{GoDec} algorithm comes closest to our performance - however, it has to be stressed that we must feed the exact cardinality of $S$ into the \emph{GoDec} algorithm, which is not available in a real-world application.
We chose the number of iterations between the alternating minimization steps as $I=50$ for all phase transition plots. We choose $p=0.5$ and shrink $\mu$ from $0.9$ to $10^{-4}$ for \eqref{eq:lpnorm}, from $2$ to $0.005$ for \eqref{eq:lognorm} and, respectively, from $2$ to $0.05$ for \eqref{eq:atansquare}.
For comparison we set the actual dimension of the original subspace  as an upper bound for the rank. However, the algorithm performs equally well if the upper bound on the rank is moderately higher. To show this, we investigate the case $\rho_S = 0.2$, $m=n=400$ and $k=80$ (i.e.~$\tfrac{k}{m}=0.2$) again for \eqref{eq:lpnorm} using the previous parameters but this time varying the upper bound on the rank. As the reconstruction accuracies for $S$ and $L$ in Figure \ref{fig:overest} illustrate, the separation quality is equally good if the subspace dimension is slightly overestimated. Thus, even if the true rank can only be assessed (e.g.~from the largest singular values), the proposed algorithm still performs reliably in a scenario where other methods fail.
\begin{figure}
\includegraphics[width=\textwidth]{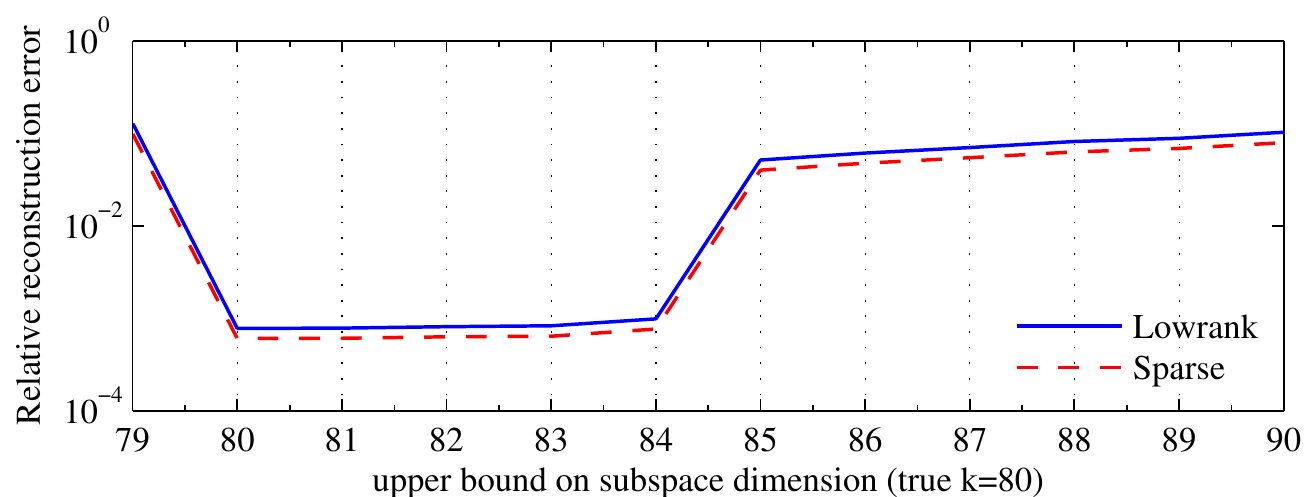}
\caption{Moderate overestimation of the subspace dimension does not alter the reconstruction accuracy}
\label{fig:overest}
\end{figure}

\subsection{Incomplete observations}
In a second series of experiments we evaluate our algorithm for the case of data being incompletely observed and compare the performance against \emph{SpaRCS}, which is especially designed for compressive measurements. The results in Figure \ref{fig:incomplete} illustrate that our method recovers a broader range of configurations than the competing \emph{SpaRCS}, which has its strengths especially for very low-rank matrices. In general, with less samples being available, the lower the dimension of the underlying subspace and the sparser the outliers have to be for a successful reconstruction. We stress that (i) the outliers are placed at random positions and (ii) the data is subsampled in a completely random way.
\begin{figure}[ht]
\begin{center}
\subfigure[atan$(80\%)$]{\includegraphics[width=0.22\textwidth]{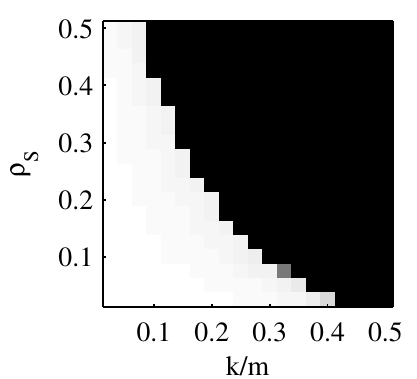}}
\subfigure[atan$(50\%)$]{\includegraphics[width=0.22\textwidth]{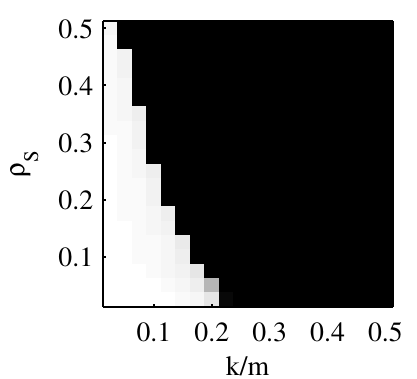}}
\subfigure[SpaRCS$(80\%)$]{\includegraphics[width=0.22\textwidth]{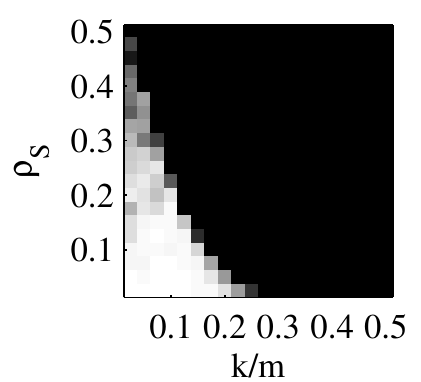}}
\subfigure[SpaRCS$(50\%)$]{\includegraphics[width=0.22\textwidth]{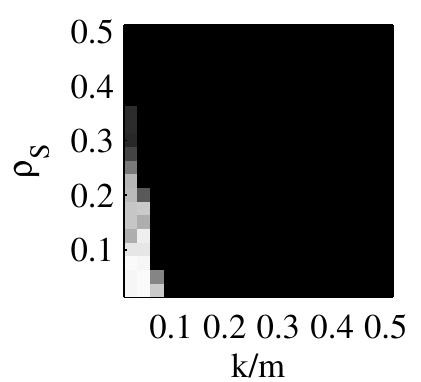}}
\caption{Phase transitions at recovering a low-rank matrix from incomplete observations}
\label{fig:incomplete}
\end{center}
\end{figure}

\subsection{Noisy reconstruction}
To investigate the behavior of the proposed algorithm in the presence of additive Gaussian noise we perform the following experiments: $\rho$ and $k/m$ are fixed to $0.1$ and a Gaussian noise term $N \in \mathbb{R}^{m \times n}$ of a particular energy level is added to the data. We run our alternating minimization method for $10$ iterations and compare the performance and the runtime against the above mentioned competing algorithms. Intuitively, a low-rank and sparse decomposition on a noisy dataset should mostly affect the sparse component, as Gaussian noise is full rank and only a small amount of noise will affect the low-rank component.
Figure \ref{fig:noisy} illustrates the recovery precision for the low-rank matrix, where the SNR measures the relation between the energy of $L$ and $N$. As the results reveal, the proposed rank-controlling method provides a rather noise-robust estimation of the subspace, which supports our choice of a simple model. Like all other methods, the reconstruction quality deteriorates with increasing noise level. However, at a moderate noise level our algorithm performs equally well as \emph{GoDec}, which in contrast to our method requires additional knowledge about the cardinality of the sparse component.

Concerning the average runtime for solving the noisy decomposition task on a desktop computer in MATLAB, \emph{GoDec} performs fastest in $1.5$ seconds and the \emph{IALM} method requires about $2.7$ seconds. Our framework needs an average runtime between $3$ (\emph{atan}) and $3.7$ (\emph{lpnorm}) seconds to solve the task and outperforms \emph{SpaRCS} and \emph{GRASTA}, which require $6.1$ and $11$ seconds, respectively.
Lastly, \emph{EALM} is rather costly with $30$ seconds runtime.
\begin{figure}[H]
\includegraphics[width=0.9\textwidth]{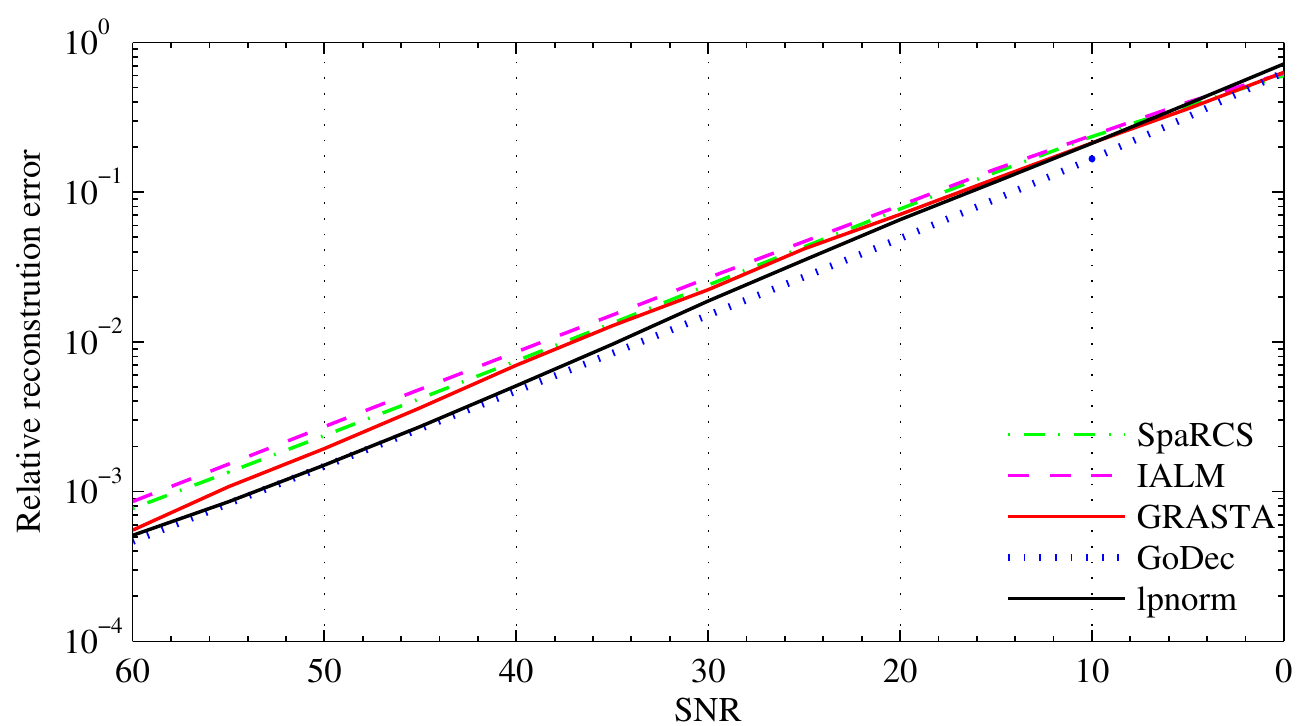}
\caption{Relative subspace recovery error at different Signal-to-Noise-Ratios}
\label{fig:noisy}
\end{figure}
\subsection{Subspace tracking on a real-world example}
\begin{figure}[ht]
\begin{center}
\subfigure[original $x$]{\includegraphics[width=0.18\textwidth]{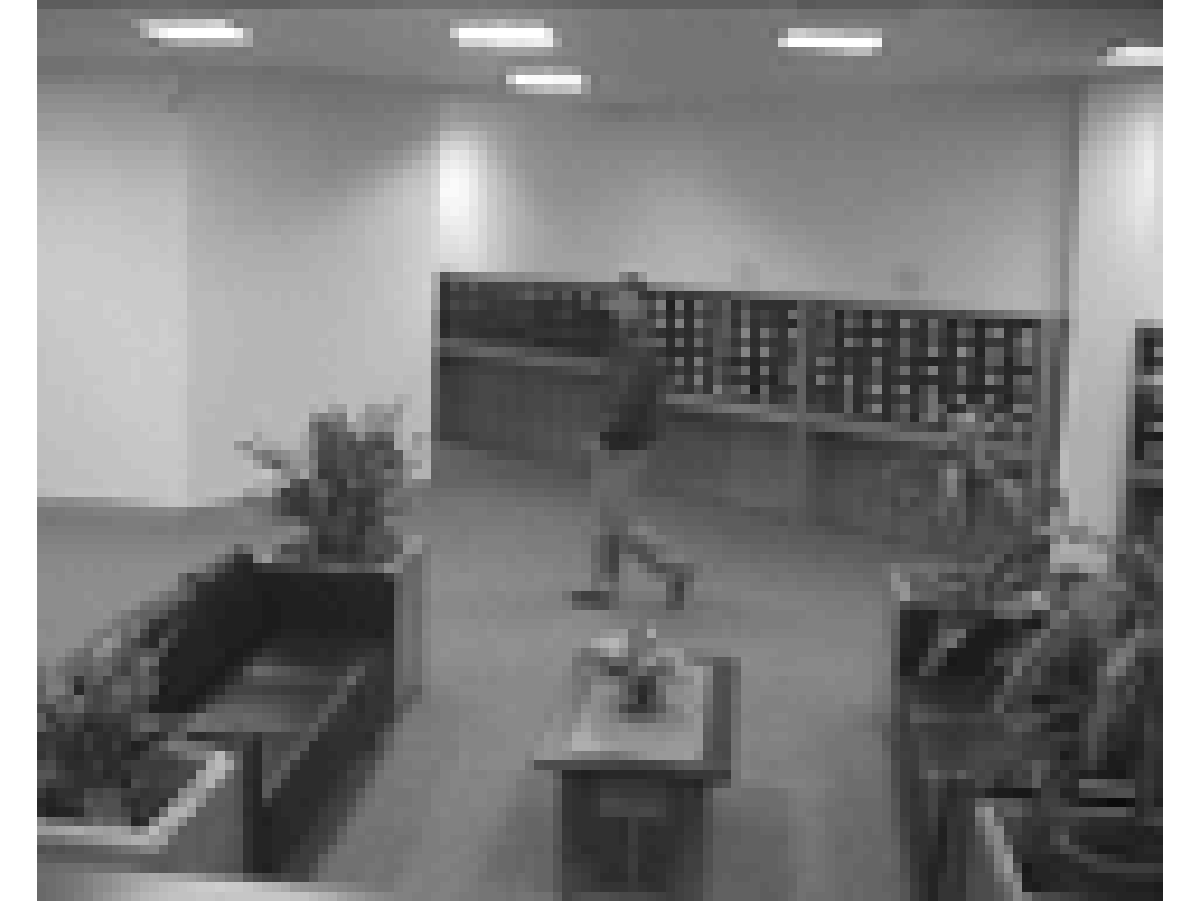}}
\subfigure[$\hat{l}\; (\mu=2)$]{\includegraphics[width=0.18\textwidth]{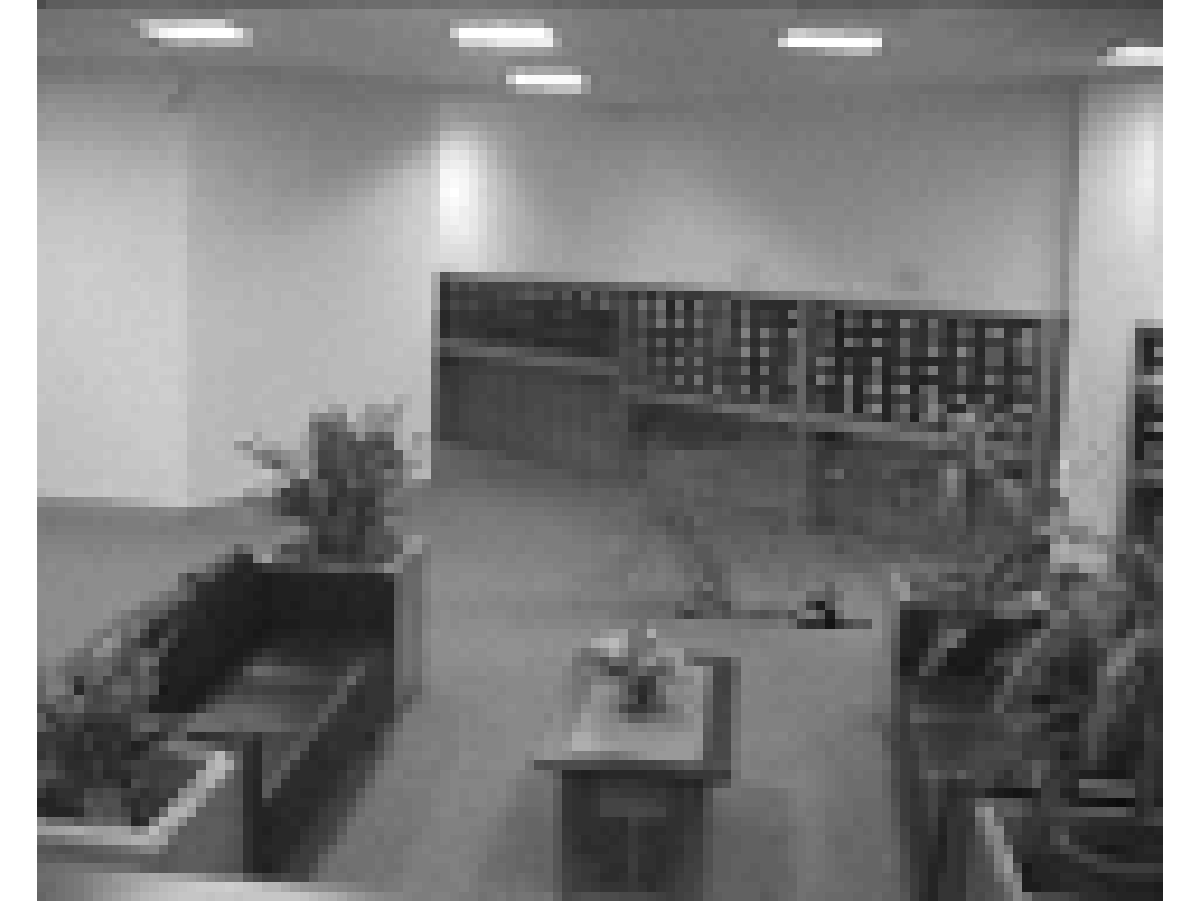}}
\subfigure[$\hat{s}\; (\mu=2)$]{\includegraphics[width=0.18\textwidth]
{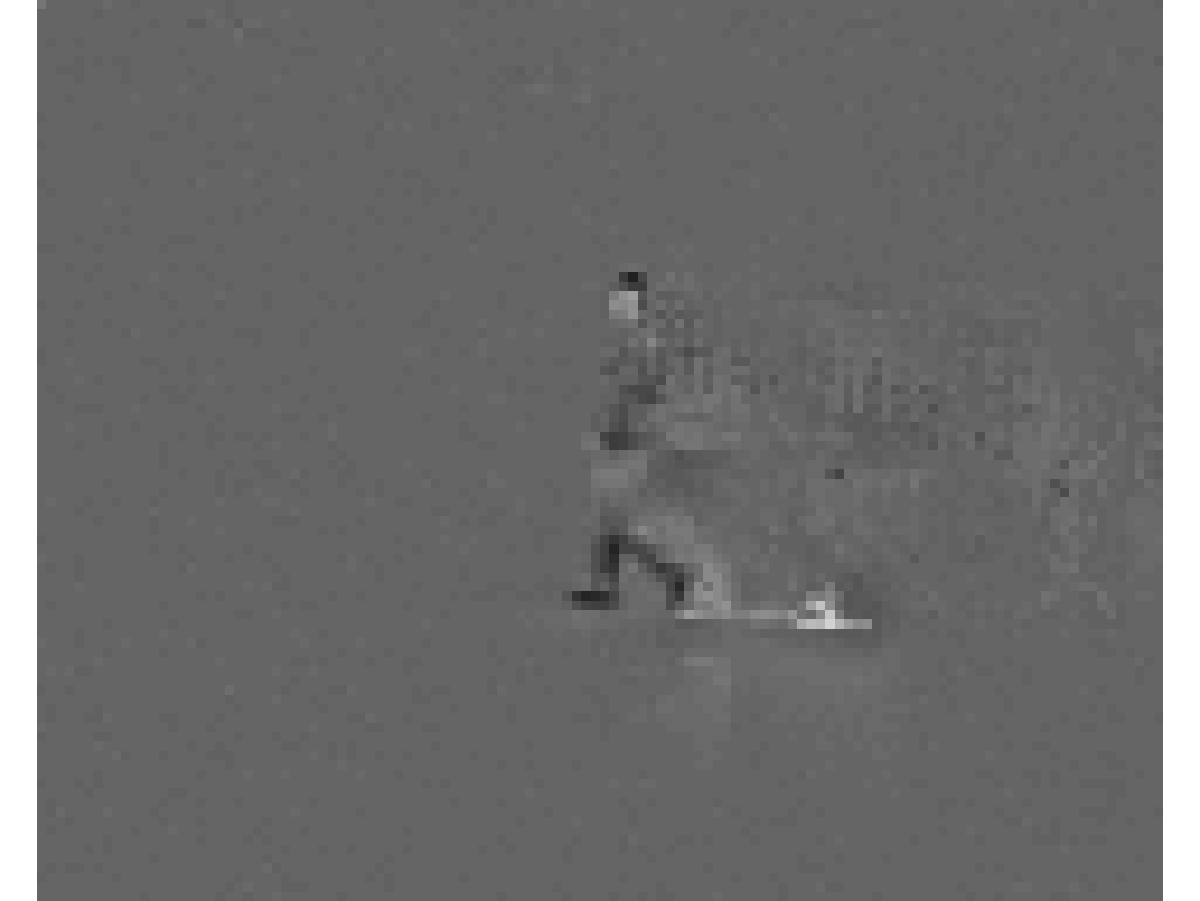}}
\subfigure[$\hat{l}\; (\mu=0.01)$]{\includegraphics[width=0.18\textwidth]{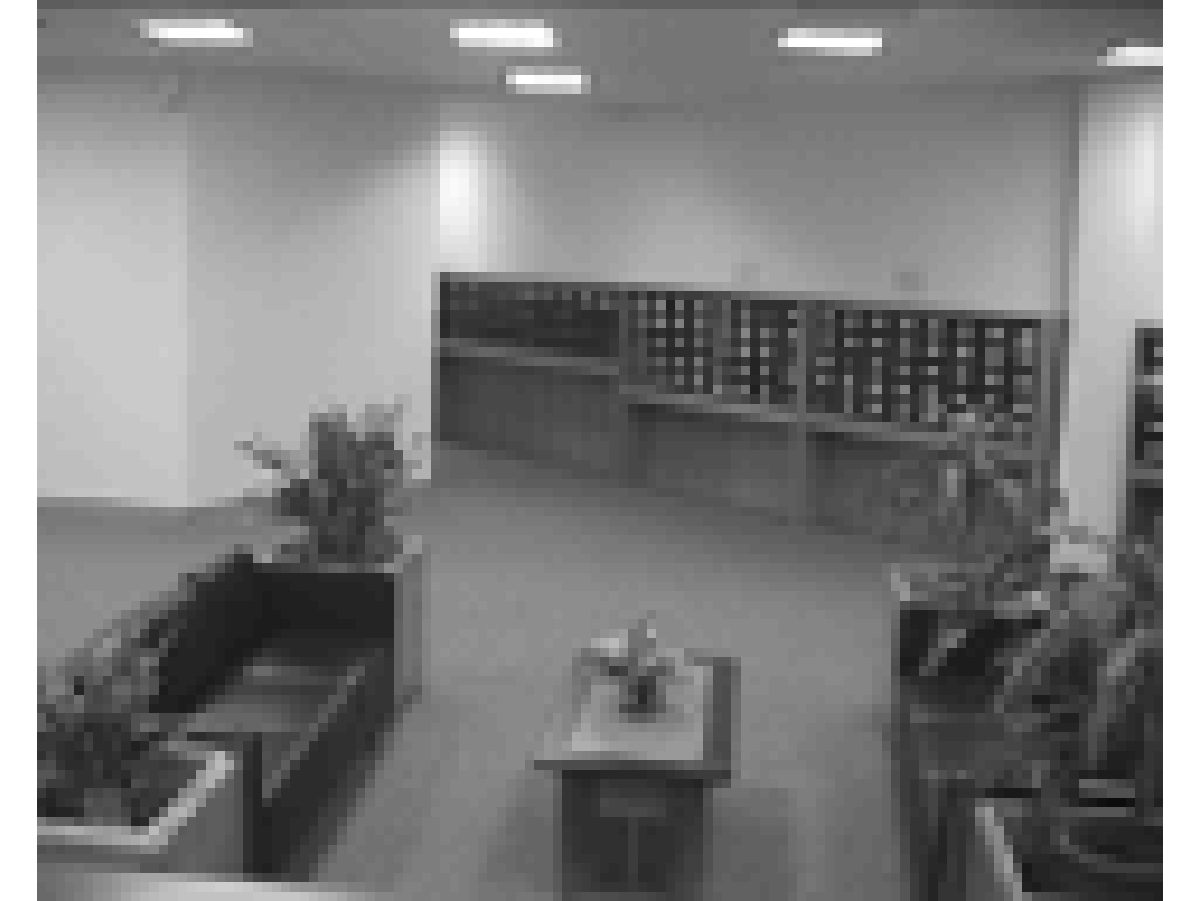}}
\subfigure[$\hat{s}\; (\mu=0.01)$]{\includegraphics[width=0.18\textwidth]
{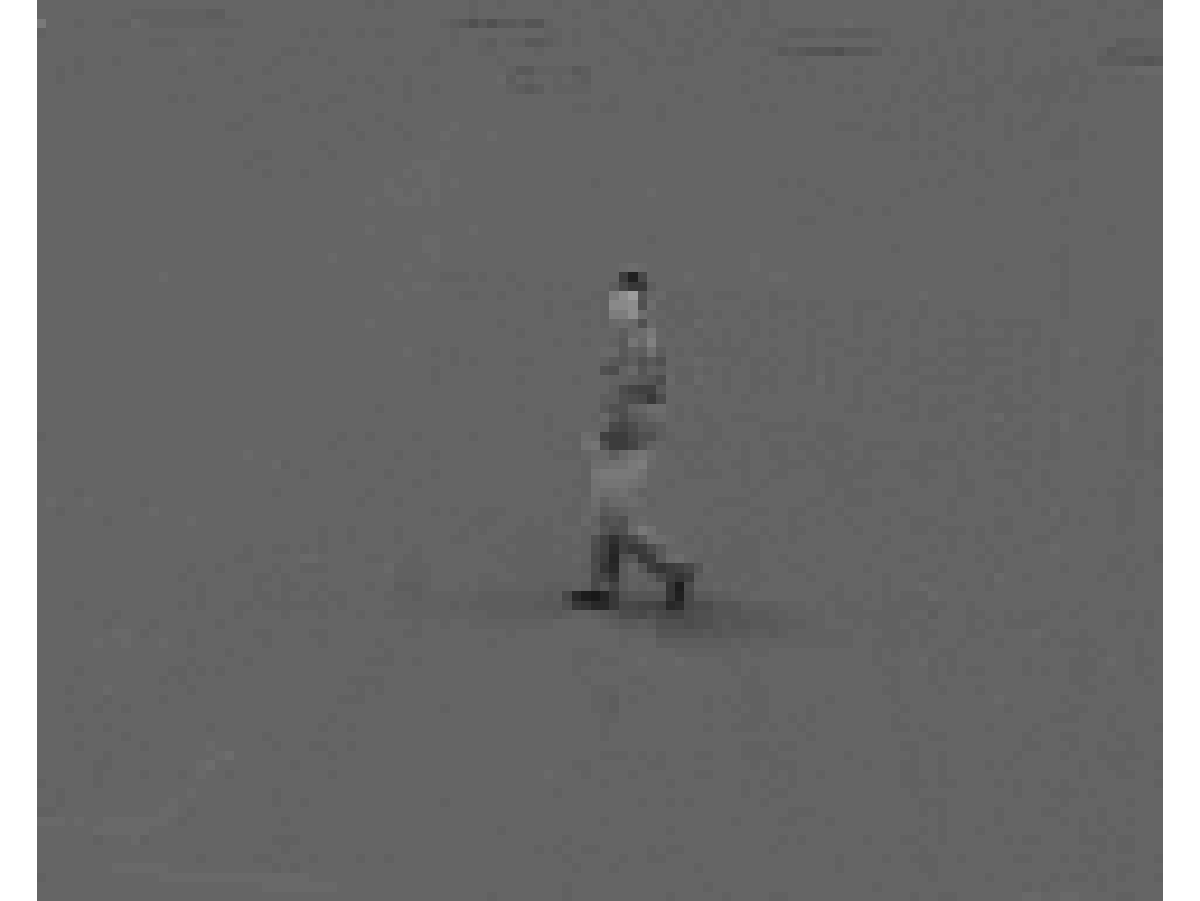}}
\caption{Subspace tracking - Background and foreground for different smoothing parameters}
\label{fig:tracking_LS}
\end{center}
\end{figure}
For the purpose of robustly tracking the underlying subspace, we selected the dataset \emph{lobby} from \cite{Li2004} and perform the widely popular background subtraction task. In this task, the background of a pixel-wisely sampled video is modeled by a low-rank approximation and foreground objects can automatically be extracted as they are assumed to be sparse. The selected scene shows an office scenario with people occasionally walking through the scene. It is especially challenging as the lighting conditions change significantly after about $400$ frames. We initialize our tracking algorithm with the first $50$ frames of the sequence using the proposed alternating minimization method for $10$ steps and perform our tracking algorithm on the subsequent frames. From our $\ell_0$-surrogates we choose \eqref{eq:atansquare} and upper-bound the desired rank $k$ by $2$. To demonstrate the influence of the smoothing parameter we fix $\mu=2$ for one experiment and for the other we shrink $\mu$ from $2$ to $0.01$ in the initialization phase and leave it for the following tracking procedure. The choice of a suitable weighting parameter is a trade-off between reaction time (i.e.~how long it takes to adapt the subspace) and the risk of overfitting the subspace (i.e.~$\hat{l} \approx x$), which leads to ghost images in the reconstruction. For all experiments we select $w=0.05$ as a weighting factor.

The sample observation of frame $\#\;365$ in Figure \ref{fig:tracking_LS} illustrates that in both configurations the subspace $\hat{l}$ is successfully recovered, as the sparse foreground estimates $\hat{s} = x - \hat{l}$ (\ref{fig:tracking_LS}(c) and (e)) contain only the moving person in the scene. However, the smaller $\mu$ is selected, the less blurry the extracted silhouette, as becomes clear from the comparison between the two experiments. The more sparsity on the residual is enforced, the more effectively can ghost images from previous observations be suppressed in the extracted foreground objects.
\begin{figure}[H]
\begin{center}
\subfigure[frame $\#425$]{\includegraphics[width=0.18\textwidth]{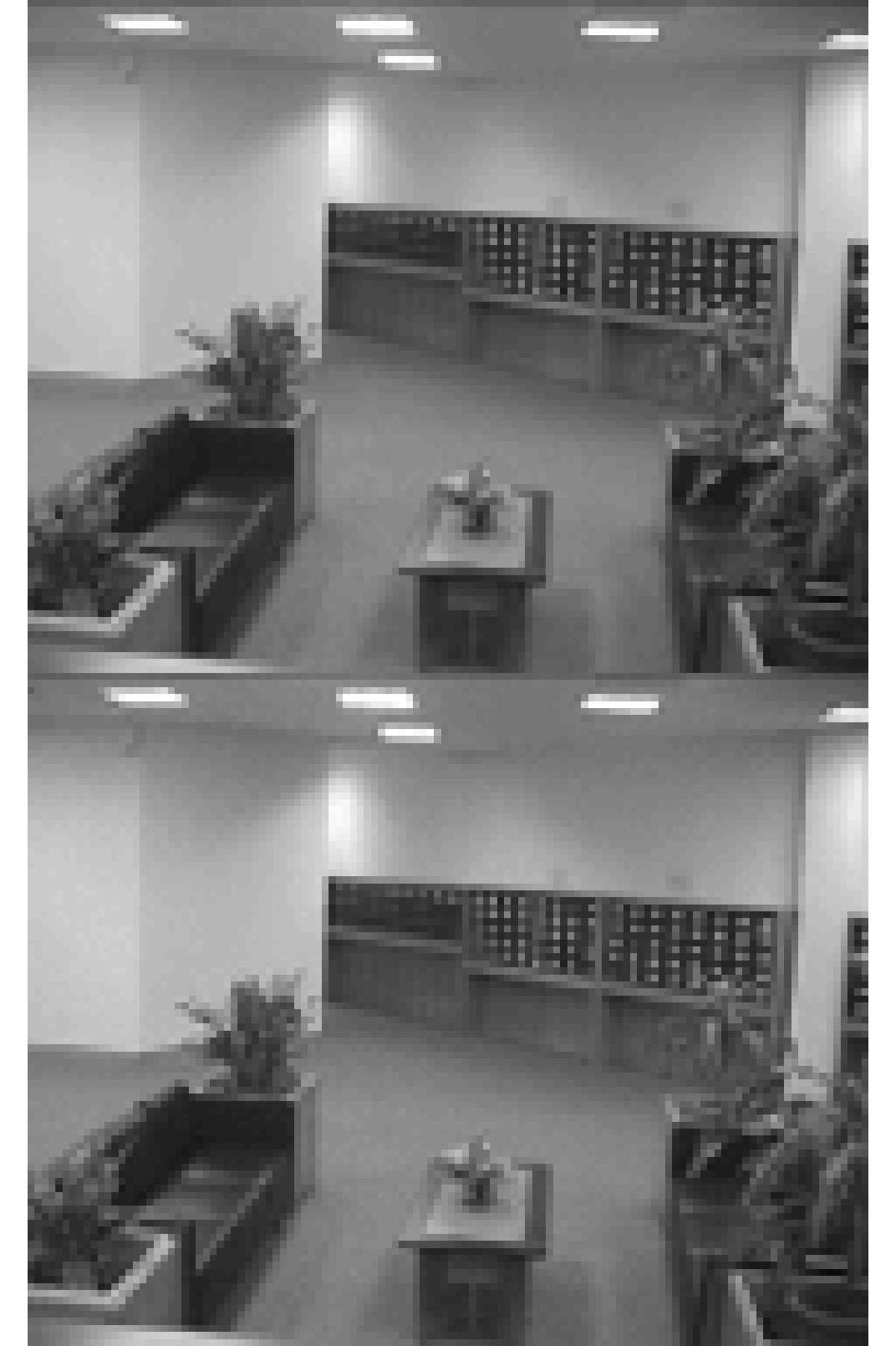}}
\subfigure[frame $\#437$]{\includegraphics[width=0.18\textwidth]{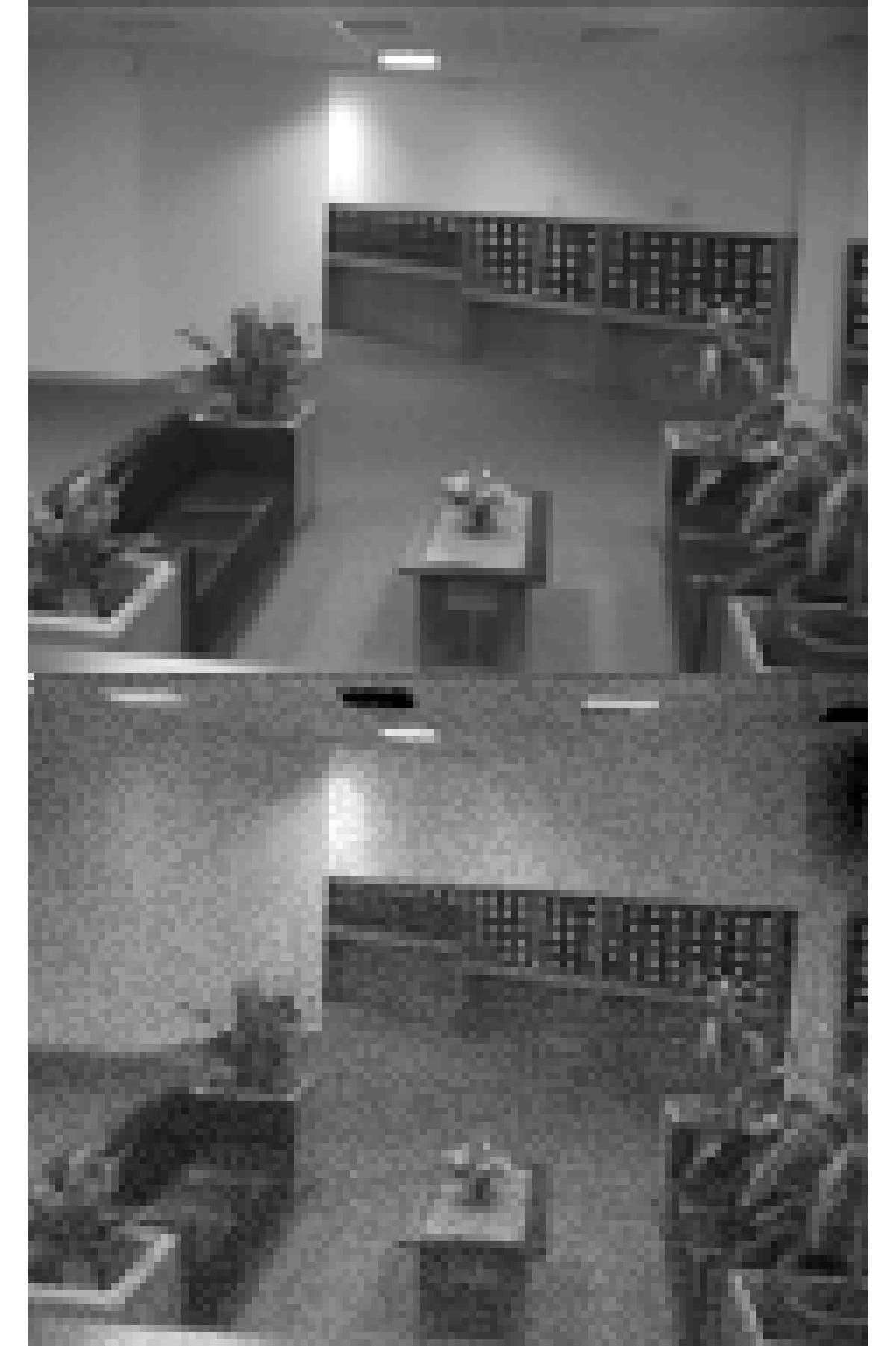}}
\subfigure[frame $\#450$]{\includegraphics[width=0.18\textwidth]{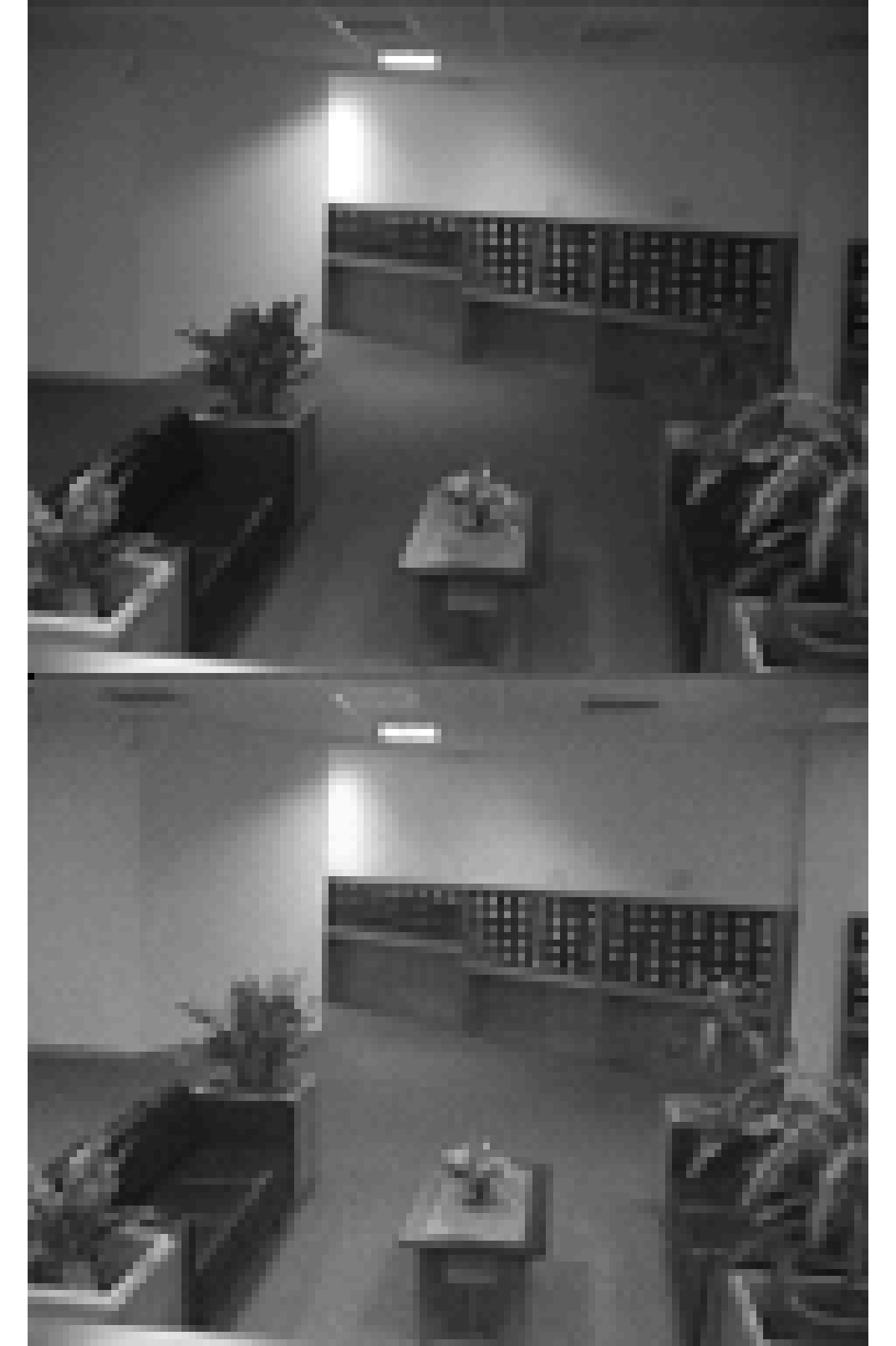}}
\subfigure[frame $\#462$]{\includegraphics[width=0.18\textwidth]{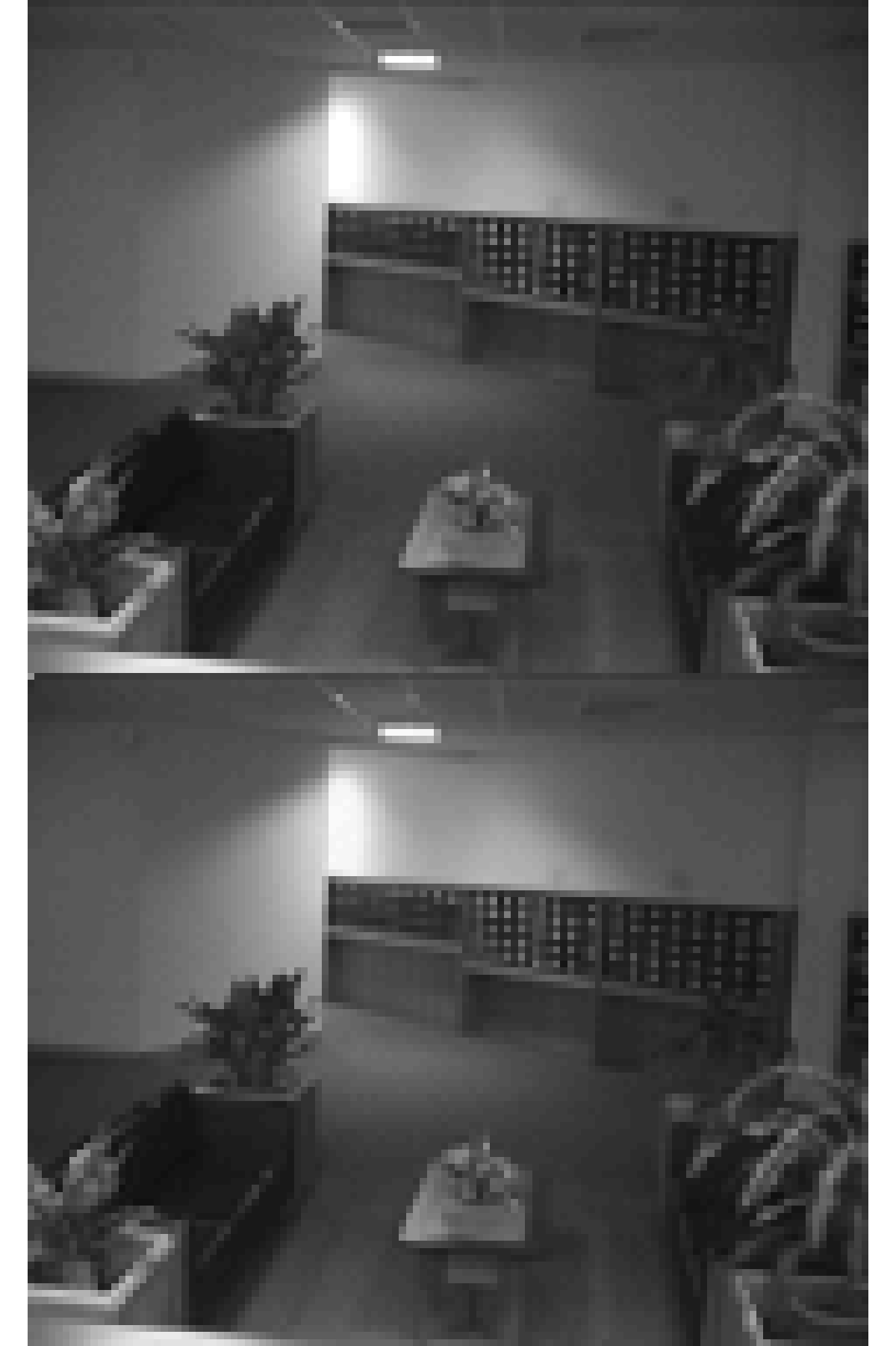}}
\subfigure[frame $\#475$]{\includegraphics[width=0.18\textwidth]{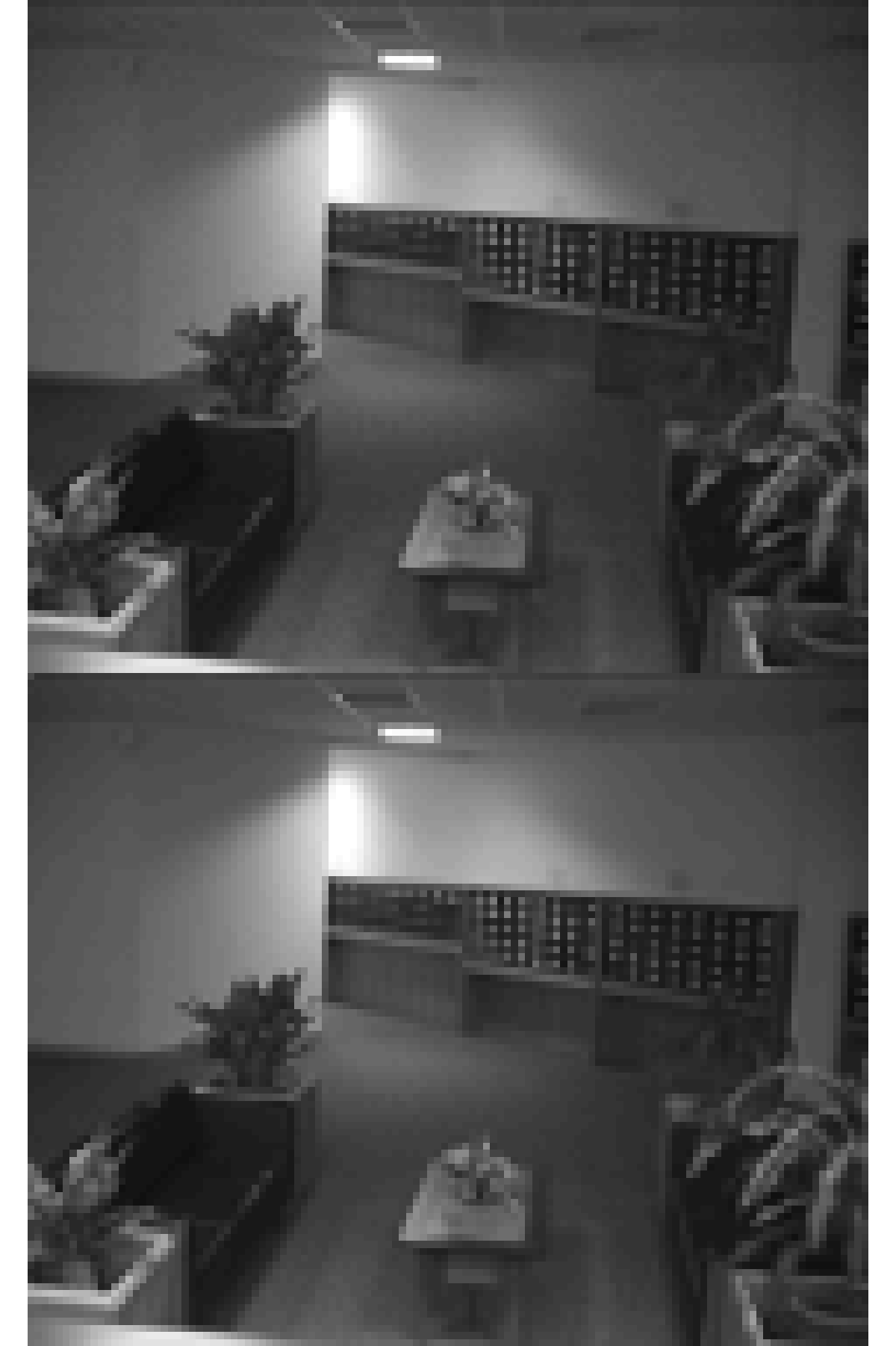}}
\caption{Tracking a change in room illumination. Upper: original video, lower: low-rank approximation ($\mu=2$)}
\label{fig:tracking_switch}
\end{center}
\end{figure}

Figure \ref{fig:tracking_switch} illustrates the behavior of the algorithm when a change occurs in the background. Although the lighting conditions following frame $\#\;425$ have not been observed during the initialization, our tracking algorithm adjusts the subspace to the new background within a small number of frames.

\section{Conclusion}
We present a framework for Robust PCA that is able to recover a low-rank matrix from a data set corrupted by sparse outliers and missing data. Experiments show that our method covers a wider range of scenarios in terms of higher rank and greater number of outliers than other state of the art methods. Even if the data is incompletely observed a comparably good performance is achieved. The same holds true for data sets corrupted by additive Gaussian noise.

Instead of tackling a convex relaxation of the problem, our algorithm reaches as closely as possible to the ideal Robust PCA objective by modelling the problem as simply as possible and making no additional assumptions, e.g.~on the number of sparse outliers. Instead of the widely-popular $\ell_1$-measure we propose using $\ell_0$-surrogates, which can be adjusted so that the optimization is performed rather smoothly or, in the limit, sparsity is enforced in an $\ell_0$-like behavior, which leads to superior performance. In order to obtain an inherent upper bound on the dimension of the dominant subspace we propose an optimization problem on the Grassmannian and derive an efficient retraction that saves time and storage and makes the algorithm applicable for large-scale problems. Furthermore we show how the method can be adapted to allow tracking a subspace that varies over time and use real-world data to demonstrate the performance.

\bibliographystyle{spbasic}      
\bibliography{bibliography}   

%
%

\end{document}